\definecolor{OursWithoutAgent}{HTML}{d32f2f}
\definecolor{OursWithAgent}{HTML}{f77f00}
\definecolor{TokenColor}{HTML}{774936}
\newcommand{\OursRowColor}{\rowcolor{OursWithoutAgent!20}}
\newcommand{\TokenColor}{\rowcolor{TokenColor!20}}
\newcommand{\emoji}[1]{%
  \includegraphics[height=0.75em]{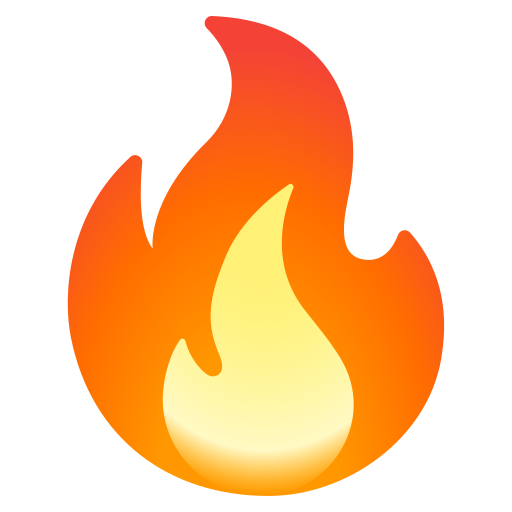}%
}
\definecolor{Scout}{HTML}{F68377}   
\definecolor{VLM}{HTML}{3F8079}    
\definecolor{citegreen}{HTML}{486ED7}
\def \R {\mathbb{R}}
\def \v {\mathbf{v}}
\def \t {\mathbf{t}}
\def \S {\mathcal{S}}
\def \N {\mathcal{N}}
\def \A {\mathcal{A}}
\def \B {\mathcal{B}}
\newtheorem{thm}{Theorem}
\newtheorem{prop}{Proposition}
\newtheorem{cor}[thm]{Corollary}
\definecolor{mygray}{gray}{0.85}
\newcommand{\fg}[1]{{\color{red}#1}}
\newcommand{\oursadapt}{$\text{MMTok}_{\text{Adapt}}$}  % MMTok with Agent  
\newcommand{\ours}{MMTok}  % MMTok without Agent
\title{%
  \raisebox{-0.2\height}{\includegraphics[height=1.6em]{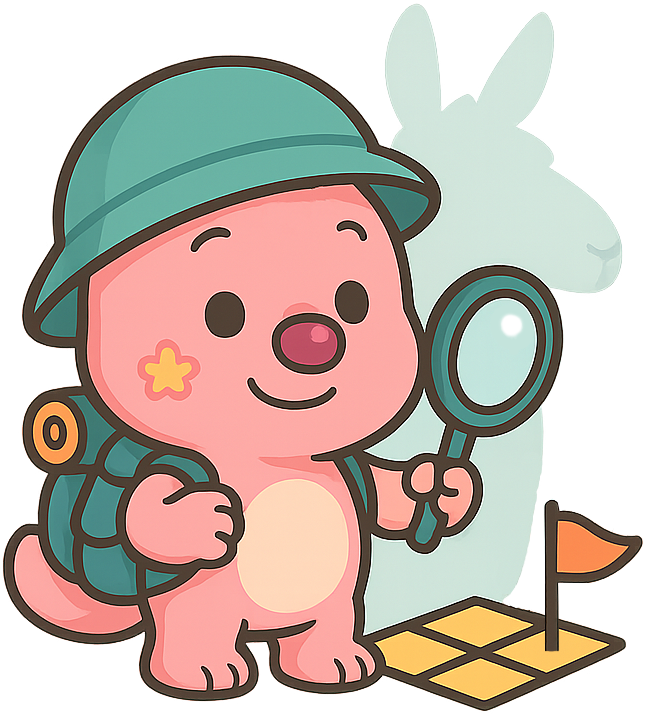}}
  {\color{Scout}MM}{\color{VLM}Tok}: Multimodal Coverage Maximization for Efficient Inference of VLMs 
}
\author{
\textbf{Sixun Dong}\textsuperscript{1}$^\dagger$\hspace{0.4cm}
\textbf{Juhua Hu}\textsuperscript{2}  \hspace{0.4cm}
\textbf{Mian Zhang}\textsuperscript{3}$^\dagger$ \hspace{0.4cm}
\textbf{Ming Yin}\textsuperscript{4}$^\dagger$\hspace{0.4cm}
\textbf{Yanjie Fu}\textsuperscript{1} \hspace{0.4cm}   
\textbf{Qi Qian}\textsuperscript{5}$^\text{\Letter}$ \\
\hspace{0.15cm}
\textsuperscript{1}Arizona State University   
\hspace{0.5cm}
\textsuperscript{2}University of Washington 
\hspace{0.5cm} 
\textsuperscript{3}University of Texas at Dallas  \\
\hspace{2.5cm} 
\textsuperscript{4}Duke University
\hspace{2cm}
\textsuperscript{5}Zoom Communications
\\
{\footnotesize
\texttt{\{sixundong.ai,qianq.mail\}@gmail.com,juhuah@uw.edu,yanjie.fu@asu.edu}}
}
\begin{document}

\maketitle
\renewcommand\thefootnote{} 
\footnotetext{$\dagger$~Work done during internship at Zoom Communications.}   
\footnotetext{\Letter ~Corresponding author.}
\renewcommand\thefootnote{\arabic{footnote}}

\begin{abstract}

Vision-Language Models (VLMs) demonstrate impressive performance in understanding visual content with language instruction by converting visual inputs to vision tokens. However, redundancy in vision tokens results in the degraded inference efficiency of VLMs. While many algorithms have been proposed to reduce the number of vision tokens, most of them apply only unimodal information (i.e., vision/text) for pruning and ignore the inherent multimodal property of vision-language tasks. Moreover, it lacks a generic criterion that can be applied to different modalities. To mitigate this limitation, in this work, we propose to leverage both vision and text tokens to select informative vision tokens by the coverage criterion. We first formulate the subset selection problem as a maximum coverage problem. Afterwards, a subset of vision tokens is optimized to cover the text tokens and the original set of vision tokens, simultaneously. The proposed method \ours~is extensively evaluated on benchmark datasets with different VLMs. The comparison illustrates that vision and text information are complementary, and combining multimodal information can surpass the unimodal baseline with a clear margin. Moreover, under the maximum coverage criterion on the POPE dataset, our method achieves a 1.87× speedup while maintaining 98.7\% of the original performance on LLaVA-NeXT-13B. Finally, with only four vision tokens, 87.7\% of the original performance is still preserved on LLaVA-1.5-7B. These results highlight the effectiveness of coverage in token selection. The code is available at \url{https://github.com/Ironieser/mmtok}
 
\end{abstract}

\begin{figure}[h]
    \centering

    \includegraphics[width=0.9\textwidth]{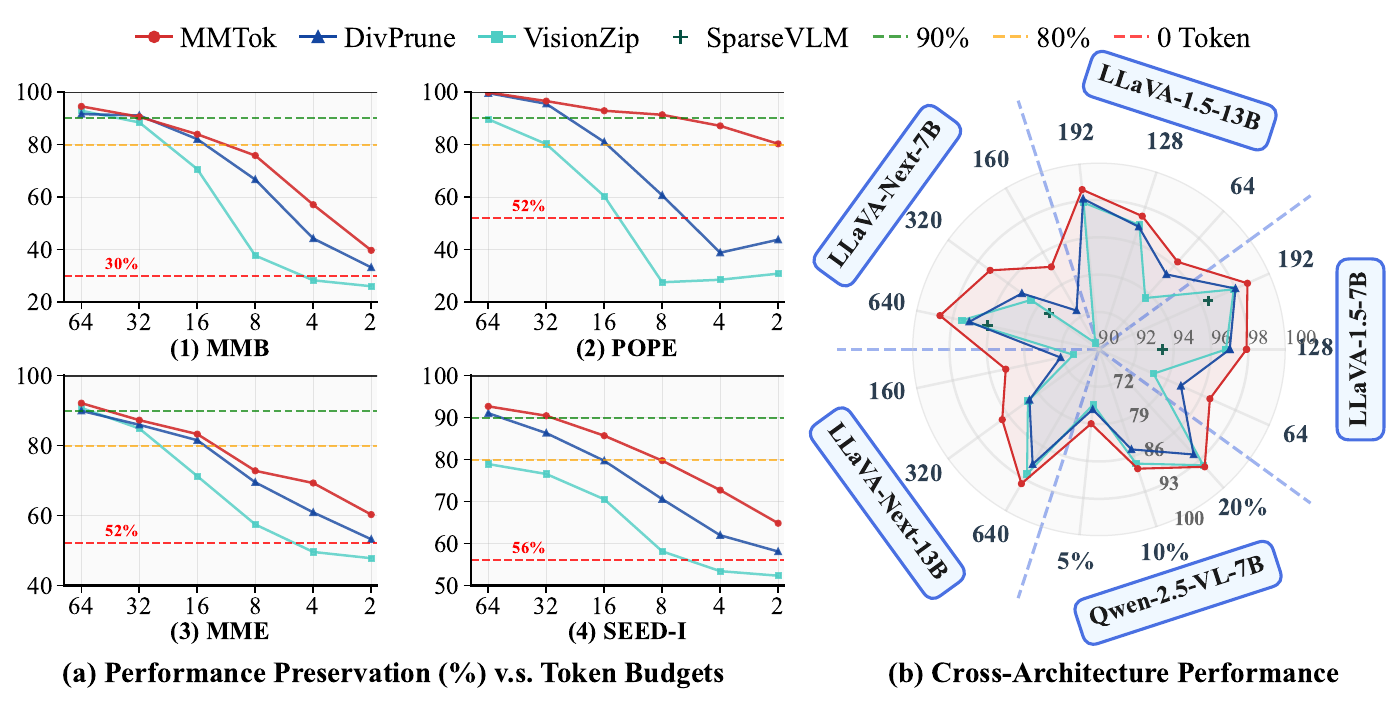}

\caption{MMTok demonstrates better performance across multiple benchmarks.}
\vspace{-3mm}
\label{fig:teaser}
\end{figure}

\section{Introduction}

By converting the visual input to vision tokens, Vision-Language Models (VLMs) can leverage powerful Large Language Models (LLMs) to understand visual content as text~\citep{liu2024visual,li2024mini,team2023gemini}. Unlike discrete text tokens, where the information is highly compressed, current vision encoders extract vision tokens directly from the original input patches, which are redundant according to previous studies~\citep{bolya2022token,MAE} and their count can far exceed that of text tokens. For example, given ``Describe the image'' with less than 10 text tokens, 2,880 vision tokens can be obtained from a single image in LLaVA-NeXT~\citep{liu2024llavanext}. 

Since LLMs are built on self-attention layers~\citep{VaswaniSPUJGKP17} that have a quadratic computational cost with respect to the total number of tokens, the large volume of vision tokens can significantly challenge the inference efficiency of VLMs. To accelerate inference, many works~\citep{shang2024llava,yang2025visionzip,zhang2024sparsevlm} have been proposed to sample a subset of vision tokens for inference without compromising performance. While some work adopts an additional training process~\citep{yang2025visionzip} to enable vision token selection, in this work, we will focus on the training-free paradigm to reduce optimization efforts. Our experiments also confirm that the proposed training-free method can even outperform baselines with fine-tuning.

Despite different architectures of VLMs~\citep{team2024chameleon,bai2025qwen2,guo2025seed1}, the leading performance is achieved by architectures employing a separate vision encoder to obtain vision tokens~\citep{bai2025qwen2}. 
In that architecture, both vision tokens and text tokens are available for token selection before applying LLMs. However, most of the existing work relies on \textit{unimodality} for pruning while the multimodal information has not been explored sufficiently~\citep{zhang2024sparsevlm,yang2025visionzip,alvar2025divprune}. For example, SparseVLM~\citep{zhang2024sparsevlm} mainly considers text tokens from language instruction to guide the pruning of vision tokens, while VisionZip~\citep{yang2025visionzip} heavily depends on the \texttt{[CLS]} vision token to select informative vision tokens. By investigating vision-language tasks, we find that given the same image, the answers can be different due to user-specific text queries, while the same text instruction can be applied for different images, i.e., caption tasks. Therefore, a unimodal method is hard to capture sufficient information about target tasks, implying a suboptimal performance for token selection.

In order to leverage both vision and text information to obtain informative vision tokens, in this work, we propose a multimodal strategy for efficient inference. First, we formulate the token selection problem as a \textbf{maximum coverage problem}, which aims to cover the target tokens with a subset of source tokens. While the source tokens are vision-only, the target ones can come from either text or vision, respectively. Therefore, the framework can explicitly combine the information from different modalities. Then, we optimize the coverage problem by maximizing a submodular function defined on the similarity between target and source tokens. Although the original problem is NP-hard~\citep{KhullerMN99}, a simple greedy algorithm can observe an approximate solution that is not worse than $(1-1/e)$ of the optimal solution~\citep{nemhauser1978analysis}. 
The main contributions of this work are summarized as follows.

\begin{itemize}[leftmargin=1.5em,topsep=0.1em,itemsep=0em]
    \item We introduce the maximum coverage problem for vision token selection. The problem can be formulated as maximizing a submodular function, which has an efficient algorithm to obtain a near-optimal solution with a theoretical guarantee.
    \item We apply the coverage criterion to cover both the text tokens and the entire set of vision tokens with a subset of selected vision tokens. The text-vision and vision-vision coverage explicitly help explore multimodal information for selection.
    \item Experiments are conducted on benchmark datasets with diverse VLMs. The superior performance of the proposed method demonstrates the effectiveness of the proposed coverage criterion for the subset selection of vision tokens. For example, the proposed \ours~can achieve overall best performance under different settings as illustrated in \cref{fig:teaser} (b) and shows the potential to compress to an extremely small number of vision tokens as in \cref{fig:teaser} (a).
\end{itemize}

\section{Related Work}
VLMs, such as LLaVA~\citep{liu2023improvedllava}, InstructBLIP~\citep{dai2023instructblip}, and Qwen~\citep{bai2025qwen2}, have become a cornerstone for multimodal understanding by integrating large-scale vision encoders (e.g., CLIP-ViT~\citep{radford2021learning}) with pre-trained language models. These models achieve strong performance by representing images as sequences of visual tokens, but their inference cost grows quadratically with token count, highlighting the need for more efficient processing. 

Many vision token selection methods have been proposed recently, but most of them rely only on unimodal information for pruning~\citep{yang2025visionzip,shang2024llava,chen2024image,zhang2024sparsevlm,alvar2025divprune}. For example, VisionZip~\citep{yang2025visionzip} and FastV~\citep{chen2024image} prune tokens using pre-trained attention signals, either ranking by \texttt{[CLS]} token attention (VisionZip) or discarding low-attention vision tokens in deeper layers (FastV). Besides ranking, DivPrune~\citep{alvar2025divprune} uses a diversity-based criterion but only has vision tokens to maximize the intra-set diversity. These methods rely on vision information and may miss query-related semantics~\citep{jain2019attention,wiegreffe2019attention}. SparseVLM~\citep{zhang2024sparsevlm} instead uses text-to-vision attention for scoring, yet ignores the information from the 
whole image. To mitigate the gap between existing unimodal algorithms and target multimodal tasks, we propose a coverage-based criterion to leverage both vision and text information sufficiently to select vision tokens effectively.

\section{The Proposed Method}
\label{sec:method}

\begin{figure}[t]
    \centering
    \includegraphics[width=0.9\textwidth]{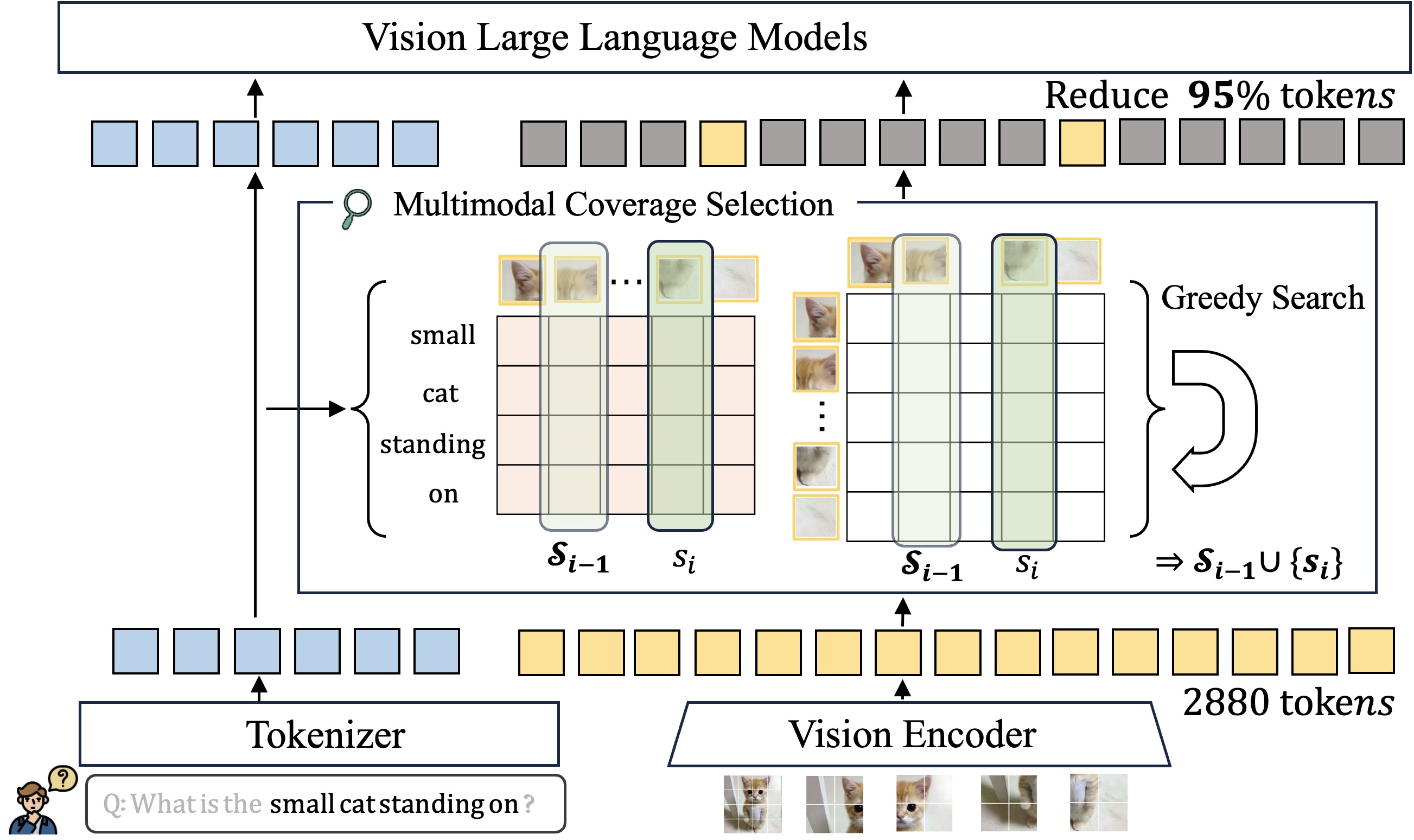}
    \caption{\textbf{Overview of MMTok framework.} Our method optimizes two maximum coverage problems simultaneously to leverage text-vision and vision-vision similarity for vision token selections.}
    \label{fig:mmtok}
    \vspace{-1em}
\end{figure}

To leverage the power of pre-trained models, many existing VLMs adopt a pre-trained vision encoder to extract vision tokens from images and then concatenate them with text tokens as input for the pre-trained LLMs. Although the simple architecture demonstrates promising performance, the inference efficiency can be challenging. Concretely, given an image, a pre-defined number of vision tokens will be obtained as $\{\v_1,\ldots,\v_n\}$. Even for a small $336\times 336$ image, $n$ is 576 with the ViT-L-336px from CLIP~\citep{clip}, which is much larger than that of the text tokens from the text query~\citep{liu2023improvedllava}. The large $n$ will significantly slow down the inference of LLMs, which relies on the self-attention operations, and the complexity is quadratic to the total number of tokens.

To accelerate the inference of VLMs, we propose to select an informative subset of vision tokens $\{v_s\}_{s\in \S}$ to reduce the number of input tokens for LLM in VLM, where $\N=\{1,\ldots,n\}$, $\S\subseteq\N$, and $|\S|\ll n$. \cref{fig:mmtok} illustrates the framework of our method, and we describe it as follows.

\subsection{Vision Token Selection by Coverage Maximization}
Unlike most of the existing work, we apply coverage as the main criterion for token selection. Given a similarity matrix $M\in\R^{m,n}$ defined between target tokens and source tokens, where $m$ denotes the number of target tokens and $n$ is the number of source tokens, a subset $\S$ will be selected to maximize the similarity between the target and selected tokens as
\begin{eqnarray}\label{eq:obj}
f(\S;M) = \frac{1}{m}\sum_{i=1}^m \max M_{i,\S};\quad \S^* = \arg\max_\S f(\S;M)
\end{eqnarray}
a.k.a. covering the target tokens by an appropriate subset of source tokens. We first find that Eq.~\ref{eq:obj} is a popular submodular function~\citep{LeskovecKGFVG07}.

\begin{prop}~\citep{LeskovecKGFVG07}
For all subsets $\A\subseteq\B\subseteq\N$ and $s\in\N\setminus\B$, 
\[f(\A\cup\{s\}) - f(\A)\geq f(\B\cup\{s\}) - f(\B)\]
\end{prop}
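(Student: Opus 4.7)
The plan is to prove submodularity term by term and then aggregate, exploiting the fact that a nonnegative linear combination of submodular functions is submodular. Concretely, define for each target index $i\in\{1,\ldots,m\}$ the function $f_i(\S) = \max_{j\in\S} M_{i,j}$, with the convention $f_i(\emptyset) = 0$ (or the minimum entry; the boundary case will not affect the argument as long as it is consistent). Then $f(\S;M) = \frac{1}{m}\sum_{i=1}^m f_i(\S)$, so it suffices to show each $f_i$ is submodular.

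First I would compute the marginal gain in closed form. For any $\S\subseteq\N$ and $s\notin\S$,
\[
f_i(\S\cup\{s\}) - f_i(\S) = \max\{M_{i,s}, f_i(\S)\} - f_i(\S) = \bigl(M_{i,s} - f_i(\S)\bigr)_+,
\]
where $(x)_+ = \max\{x,0\}$. This is the key identity, and it is the heart of why a ``max'' coverage score yields diminishing returns: once a large coordinate has already been covered, adding $s$ contributes nothing.

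Next I would invoke monotonicity. Since $\A\subseteq\B$, we have $f_i(\A) \leq f_i(\B)$ because the maximum over a larger set can only be larger. The function $x\mapsto (M_{i,s} - x)_+$ is nonincreasing in $x$, so
\[
\bigl(M_{i,s} - f_i(\A)\bigr)_+ \;\geq\; \bigl(M_{i,s} - f_i(\B)\bigr)_+,
\]
which is exactly $f_i(\A\cup\{s\}) - f_i(\A) \geq f_i(\B\cup\{s\}) - f_i(\B)$. Summing over $i$ and dividing by $m$ (both nonnegative operations) preserves the inequality, giving the claim for $f$.

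I do not foresee a substantive obstacle here; the only thing to watch is the boundary case $\A = \emptyset$, which needs a consistent convention for $f_i(\emptyset)$ so that the marginal-gain formula still holds. Everything else is a one-line monotonicity argument plus the standard closure of submodular functions under nonnegative linear combinations.
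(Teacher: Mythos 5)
Your argument is correct, and it is worth noting that the paper itself offers no proof of this proposition at all---it is stated with a citation to Leskovec et al.\ (2007)---so you are supplying the standard argument where the authors only cite. The decomposition $f(\S;M)=\frac{1}{m}\sum_i f_i(\S)$ with $f_i(\S)=\max_{j\in\S}M_{i,j}$, the closed-form marginal gain $\bigl(M_{i,s}-f_i(\S)\bigr)_+$, and the monotonicity of $x\mapsto (M_{i,s}-x)_+$ is exactly the textbook facility-location proof, and summing preserves submodularity. The one point you flag in passing deserves to be pinned down rather than waved at: with the convention $f_i(\emptyset)=0$ the marginal-gain identity \emph{fails} at $\A=\emptyset$ whenever $M_{i,s}<0$, and submodularity genuinely breaks there (take $M_{i,j}=-0.9$, $M_{i,s}=-0.1$, $\A=\emptyset$, $\B=\{j\}$: the left side is $-0.1$ while the right side is $0.8$). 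This matters in this paper because the raw matrices $M^{tv}$ and $M^{vv}$ are cosine similarities of unit vectors and can be negative. The clean fix is to set $f_i(\emptyset)=c_i$ for any $c_i\leq\min_j M_{i,j}$ (e.g.\ $-1$ here), under which your identity holds for all $\S$ including the empty set; alternatively, observe that the calibrated matrices $M^{tv'}$ and $M^{vv'}$ actually used in Alg.~2 are softmax outputs and hence strictly positive, so the convention $f_i(\emptyset)=0$ is safe for them. With that convention made explicit, your proof is complete.
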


Maximizing submodular functions in general is NP-hard~\citep{KhullerMN99}, but a simple greedy algorithm can achieve a good approximation.

\begin{prop}~\citep{nemhauser1978analysis}\label{prop:greedy}
Let $\S$ denote the subset obtained by the greedy algorithm, then we have
\[f(\S)\geq (1-1/e)\max_{\A:|\A|=|\S|}f(\A)\]
\end{prop}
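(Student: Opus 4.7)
The plan is to follow the classical argument of Nemhauser, Wolsey and Fisher, which combines monotonicity with the submodularity stated in Proposition~1. First I would observe that the coverage objective in Eq.~\ref{eq:obj} is monotone nondecreasing, since $\max M_{i,\A} \leq \max M_{i,\B}$ whenever $\A\subseteq\B$, so enlarging the chosen subset can only increase each summand. Together with the standard convention $f(\emptyset)=0$ (the baseline from which greedy begins), this puts us in the canonical setting of maximizing a monotone submodular function under a cardinality constraint.

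Let $k=|\S|$, write $\S^*=\{s_1^*,\ldots,s_k^*\}$ for any optimal subset of cardinality $k$, and let $\S_0\subseteq\S_1\subseteq\cdots\subseteq\S_k=\S$ be the nested sequence produced by the greedy algorithm. The key intermediate claim is a per-step ``fraction of the remaining gap'' inequality:
\[
f(\S^*) - f(\S_i) \;\leq\; k\bigl(f(\S_{i+1}) - f(\S_i)\bigr).
\]
I would prove this by (i) using monotonicity to bound $f(\S^*)\leq f(\S_i\cup\S^*)$, (ii) telescoping $f(\S_i\cup\S^*)-f(\S_i)$ along the elements of $\S^*$ one at a time, (iii) replacing each telescoping increment by the marginal $f(\S_i\cup\{s_j^*\})-f(\S_i)$ via Proposition~1 (each subsequent set is a superset of $\S_i$, so adding $s_j^*$ on top of it gains at most as much), and (iv) bounding each such marginal by $f(\S_{i+1})-f(\S_i)$ because the greedy algorithm picked the element with the largest marginal gain over $\S_i$.

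Setting $\delta_i = f(\S^*) - f(\S_i)$, the inequality rearranges to $\delta_{i+1}\leq(1-1/k)\delta_i$, and unrolling from $i=0$ to $i=k$ together with the elementary bound $(1-1/k)^k\leq 1/e$ yields $\delta_k\leq f(\S^*)/e$, i.e.\ $f(\S)\geq(1-1/e)f(\S^*)$ as claimed. The one step that requires genuine care is (iii)--(iv): one must keep track of which superset the submodular inequality is being applied to at each stage of the telescoping sum, and then invoke the greedy rule exactly once to pass from ``the best single element among $s_1^*,\ldots,s_k^*$'' to ``the best single element in $\N\setminus\S_i$.'' Everything else is bookkeeping.
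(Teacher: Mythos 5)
Your proposal is correct: it is the classical Nemhauser--Wolsey--Fisher argument (monotonicity, telescoping over the optimal set, submodular exchange, greedy choice, then unrolling $\delta_{i+1}\leq(1-1/k)\delta_i$ with $(1-1/k)^k\leq 1/e$), and the paper offers no proof of its own, deferring entirely to the citation of \citet{nemhauser1978analysis} --- whose proof is exactly the one you reconstruct. The only point worth flagging is that you correctly identified the two hypotheses the paper leaves implicit (monotonicity of the coverage objective and the convention $f(\emptyset)=0$, without which the final bound $f(\S)\geq(1-1/e)f(\S^*)$ would acquire an extra $e^{-1}f(\emptyset)$ term), so nothing is missing.
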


We elaborate on how to apply the coverage function for token selections in the following subsections.

\subsubsection{Maximum Text-Vision Coverage}
First, we consider covering the semantics from text tokens with source vision tokens, which aims to find the vision tokens related to the text input (e.g., query). Let $\{\t_1,\ldots,\t_m\}$ denote the text tokens from the query. A similarity matrix between text and vision tokens can be obtained as
\[M_{i,j}^{tv} = \t_i^\top \v_j\]
where $M^{tv}\in\R^{m\times n}$ and $\forall i,j, \|\t_i\|_2 = \|\v_j\|_2=1$. To align the semantic similarity between text and vision, we adopt the vision tokens after the projection layer (i.e., those concatenated with text tokens as input for LLMs). After obtaining the appropriate similarity matrix, a subset of vision tokens can be selected to maximize the similarity between all text tokens and selected vision tokens for coverage as
\[\S' = \arg\max_\S f(\S;M^{tv})\]
According to Proposition~\ref{prop:greedy}, a greedy algorithm as summarized in Alg.~\ref{alg:tv} can approximate the optimal solution. It should be noted that the proposed Alg.~\ref{alg:tv} contains only simple operations (e.g., argmax, matrix multiplication, etc.) and thus is efficient for implementation.

\begin{minipage}{0.45\textwidth}
\centering
\begin{algorithm}[H]
\caption{A Greedy Algorithm to Cover Text Input with Vision Tokens}\label{alg:tv}
\begin{algorithmic}[1]
\STATE {\bf Input:} Similarity Matrix $M^{tv}$, $k$
\STATE Initialize $\S = \emptyset$
\FOR{$i = 1,\cdots,k$}
\FOR {$s\in \N\setminus \S$}
\STATE Compute $g(s) = f(\S\cup s; M^{tv})$
\ENDFOR
\STATE Obtain $s_i = \arg\max_s g(s)$
\STATE $\S = \S\cup s_i$
\ENDFOR
\RETURN $\S$
\end{algorithmic}
\end{algorithm}
\end{minipage}%
\hfill
\begin{minipage}{0.52\textwidth}
\centering
\begin{algorithm}[H]
\caption{MMToK: A Greedy Algorithm for Multimodal Coverage}\label{alg:mm}
\begin{algorithmic}[1]
\STATE {\bf Input:} Similarity Matrices $M^{tv'}$, $M^{vv'}$, $k$
\STATE Initialize $\S = \emptyset$
\FOR{$i = 1,\cdots,k$}
\FOR {$s\in \N\setminus \S$}
\STATE Compute $g(s) = f(\S\cup s; M^{tv'},M^{vv'})$
\ENDFOR
\STATE Obtain $s_i = \arg\max_s g(s)$
\STATE $\S = \S\cup s_i$
\ENDFOR
\RETURN $\S$
\end{algorithmic}
\end{algorithm}
\end{minipage}

\subsubsection{Maximum Vision-Vision Coverage}
Although text-vision coverage can explore vision information according to text, it may be insufficient due to vague text, e.g., ``Please describe the image''. Therefore, we propose to cover all vision information with a limited number of vision tokens. Concretely, a vision-vision similarity matrix can be generated as
\[M_{i,j}^{vv} = \v_i^{'\top} \v_j^{'}\]
where $M^{vv}\in\R^{n\times n}$. Unlike $M^{tv}$ that adopts vision tokens after the projection layer to align with text tokens, those before projection are more appropriate to capture similarity between vision tokens without mixing text information. We have $\v'$ to distinguish it from the one after projection (i.e., $\v$).

Then, we can apply the greedy algorithm to select a subset of vision tokens to cover the main information implied by the whole set of vision tokens. Obviously, vision-vision coverage is complementary to text-vision coverage, which is also confirmed by our ablation study. The remaining challenge is to combine the two maximum coverage problems, which is described in the next subsection.

\subsubsection{Maximum Multimodal Coverage}  \label{sec:multimodal_coverage}
The maximum coverage problem can be applied to the original text and vision tokens simultaneously. However, $M^{tv}$ and $M^{vv}$ have different shapes and similarity measurements. Therefore, their values must be aligned before fusion. To calibrate the similarity between different modalities, the score for each row, i.e., that for target tokens, is first normalized by a softmax operation as
\[M_{i,j}^{tv'} = \frac{\exp(M_{i,j}^{tv}/\tau_t)}{\sum_{j=1}^n \exp(M_{i,j}^{tv}/\tau_t)};\quad M_{i,j}^{vv'} = \frac{\exp(M_{i,j}^{vv}/\tau_v)}{\sum_{j=1}^n \exp(M_{i,j}^{vv}/\tau_v)}\]
where the softmax operation normalizes each row to a distribution over all vision tokens. $\tau_t$ and $\tau_v$ aim to further normalize the distribution shape for text-vision and vision-vision, respectively.

After calibration, the final objective for multimodal coverage can be written as
\begin{eqnarray}\label{eq:final}
f(\S;M^{tv'},M^{vv'}) = f(\S;M^{tv'}) + \alpha f(\S;M^{vv'})
\end{eqnarray}
where $\alpha$ is used to weigh the importance of vision-vision coverage. Incorporating text-vision coverage with vision-vision coverage, the function in Eqn.~\ref{eq:final} is still a submodular function as follows.

\begin{cor}\label{cor:1}
The sum of two submodular functions is a submodular function.
\end{cor}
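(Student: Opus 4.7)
The plan is to work directly from the diminishing-returns characterization of submodularity stated in Proposition~1, since that is the definition the paper has chosen to adopt. Let $f_1$ and $f_2$ be the two submodular set functions whose sum $h := f_1 + f_2$ we wish to show is submodular. I would fix arbitrary sets $\A \subseteq \B \subseteq \N$ and an arbitrary element $s \in \N\setminus\B$, and then verify the single inequality $h(\A\cup\{s\}) - h(\A) \geq h(\B\cup\{s\}) - h(\B)$ that the definition requires.

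The key step is to split the marginal gain of $h$ into the two marginal gains of $f_1$ and $f_2$ via the linearity of set-function evaluation: $h(\A\cup\{s\}) - h(\A) = [f_1(\A\cup\{s\}) - f_1(\A)] + [f_2(\A\cup\{s\}) - f_2(\A)]$, and likewise for $\B$. Applying Proposition~1 to $f_1$ and $f_2$ separately gives two inequalities of the same form, each stating that the marginal gain at $\A$ dominates the marginal gain at $\B$. Adding these two inequalities term by term yields exactly the submodularity inequality for $h$, completing the argument.

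There is no real obstacle here: the result is essentially a one-line consequence of the definition, relying only on the fact that a sum of inequalities $a_1 \geq b_1$ and $a_2 \geq b_2$ gives $a_1 + a_2 \geq b_1 + b_2$. The only thing to be careful about is using the exact form of the definition adopted in Proposition~1 (the diminishing-returns form), rather than switching to the equivalent $f(\A) + f(\B) \geq f(\A\cup\B) + f(\A\cap\B)$ formulation, so that the corollary slots directly into the multimodal coverage objective in Eqn.~\ref{eq:final} and justifies applying the greedy guarantee of Proposition~\ref{prop:greedy} to $f(\S; M^{tv'}, M^{vv'})$. A brief remark that the weighting constant $\alpha \geq 0$ in Eqn.~\ref{eq:final} preserves submodularity (nonnegative scaling of a submodular function is submodular, by the same linearity argument) would make the connection to the paper's actual use of the corollary fully explicit.
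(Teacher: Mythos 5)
Your proposal is correct and follows exactly the paper's own argument: the paper proves Corollary~\ref{cor:1} by noting that it ``comes from the addition property of inequalities directly,'' which is precisely your step of adding the two diminishing-returns inequalities for $f_1$ and $f_2$. Your additional remark that the nonnegative weight $\alpha$ preserves submodularity is a sensible (and equally immediate) clarification of how the corollary applies to Eqn.~\ref{eq:final}, but it does not change the route.
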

\begin{proof}
It comes from the addition property of inequalities directly.
\end{proof}

With Corollary~\ref{cor:1}, we can apply a similar greedy algorithm to obtain the near-optimal solution for the multimodal scenario efficiently. The detailed algorithm is summarized in Alg.~\ref{alg:mm}.

\section{Experiments}
To evaluate the performance of the proposed method, \ours, we conduct experiments on diverse benchmark datasets and VLMs with different architectures. For a fair comparison, we use datasets adopted in VisionZip~\citep{yang2025visionzip}, which contains GQA~\citep{hudson2019gqa}, MMBench~\citep{liu2024mmbench}, MME~\citep{fu2023mme}, POPE~\citep{li2023evaluating}, ScienceQA(IMG)~\citep{lu2022learn}, VQAv2-Test-Dev~\citep{goyal2017making}, TextVQA~\citep{singh2019towards}, MMMU~\citep{yue2024mmmu}, and SeedBench~\citep{li2023seed}. Meanwhile, five VLMs are applied for comparison, that is, LLaVA-1.5-7B~\citep{liu2023improvedllava}, LLaVA-1.5-13B~\citep{liu2023improvedllava}, LLaVA-NeXT-7B~\citep{liu2024llavanext}, LLaVA-NeXT-13B~\citep{liu2024llavanext}, and a recent model Qwen-2.5-VL-7B~\citep{bai2025qwen2}. Finally, we compare our method with state-of-the-art vision token pruning algorithms, including FastV~\citep{chen2024image} (a vision-based method), SparseVLM~\citep{zhang2024sparsevlm} (a language-based method), VisionZip~\citep{yang2025visionzip} (a \texttt{[CLS]}-importance-based method), and DivPrune~\citep{alvar2025divprune} (a diversity-based method). We also include a fine-tuning-based method, VisionZip\emoji{fire}, in the comparison. We obtain the result of DivPrune through its official code, and that for other baselines is directly from~\citep{yang2025visionzip}. Evaluation is implemented under the \texttt{Lmms-eval} framework ~\citep{lmms_eval2024} with the details elaborated as follows.

\textbf{Implementation Details} The proposed method relies on an appropriate similarity for coverage optimization. Since different layers may demonstrate different similarity measurements~\citep{liu2023improvedllava}, we have vision tokens before the projection layer to compute vision-vision similarity, while those after the projection layer are for text-vision coverage. That is because the latter layer aligns better with the text. We find that our method is not sensitive to hyperparameters, as shown in the ablation study. Therefore, we fix $\tau_t=0.02$, $\tau_v=0.2$, and $\alpha=0.5$ for all experiments if not otherwise specified. %Finally, \texttt{SmolVLM2-256M-Video-Instruct}~\citep{marafioti2025smolvlm} is used as the lightweight agent to provide preliminary answers for our agentic version, \oursagent.

\subsection{Performance Comparison on Diverse Tasks}

\textbf{LLaVA-1.5-7B} First, we compare our method with baselines using LLaVA-1.5-7B, which is a popular benchmark for vision token selection. The model has a fixed number of vision tokens for arbitrary visual inputs. As shown in \cref{tab:llava15_7b_short}, given the original 576 tokens, \ours~achieves the best performance (preserving on average 98.7/97.8/96.6\% original performance of LLaVA-1.5-7B), when retaining only 192/128/64 tokens (reducing by 67/78/89\% of tokens compared to 576), respectively. Specifically, our method outperforms DivPrune by 1.8\% when using a budget of 64 tokens. Although the gap decreases with more tokens as expected, \ours~still surpasses all baselines without fine-tuning by at least 0.7\% with 192 tokens. In addition, compared to the fine-tuning method, the proposed method is 1.6\% better than VisionZip\emoji{fire} with 64 tokens, which shows the potential of the training-free strategy. 
Since VisionZip and DivPrune show much better performance than FastV and SparseVLM, we will include only them for comparison in the following experiments.
\begin{table}[t]
    \centering
    \renewcommand{\arraystretch}{1}
    \resizebox{\textwidth}{!}{%
    \begin{tabular}{c | c c c c c c c c c c}
        \toprule
        \textbf{Method} & \textbf{GQA} & \textbf{MMB} & \textbf{MME} & \textbf{POPE} & \textbf{SQA} & \textbf{VQA}$^{\text{V2}}$ & \textbf{VQA}$^{\text{Text}}$ & \textbf{MMMU}& \textbf{SEED} &\makecell[c]{\textbf{Avg}.}\\
        \midrule
        \rowcolor{mygray}
        \multicolumn{11}{c}{\textit{Total 576 Tokens}}\\
        LLaVA-1.5-7B & 61.90 & 64.70 & 1862.00 & 85.90 & 69.50 & 78.50 & 58.20 & 36.30 & 58.60 & 100\% \\
        \midrule
        \rowcolor{mygray}
        \multicolumn{11}{c}{\textit{Retain 192 Tokens} \ $\fg{\downarrow 67\%}$}\\
        FastV & 52.70 & 61.20 & 1612.00 & 64.80 & 67.30 & 67.10 & 52.50 & 34.30 & 57.10 & 89.6\% \\
        SparseVLM & 57.60 & 62.50 & 1721.00 & 83.60 & 69.10 & 75.60 & 56.10 & 33.80 & 55.80 & 95.5\% \\
        VisionZip & 59.30 & 63.00 & 1782.60 & 85.30 & 68.90 & 76.80 & 57.30 & 36.60 & 56.40 & 97.9\% \\

        DivPrune & 59.97 & 62.54 & 1762.23 & 87.00 & 68.66 & 76.87 & 56.97 & 35.44 & 58.71 & 98.0\% \\
        \textcolor{gray}{VisionZip\,\smash{\scalebox{0.6}{\emoji{fire}}}} & \textcolor{gray}{60.10} & \textcolor{gray}{63.40} & \textcolor{gray}{1834.00} & \textcolor{gray}{84.90} & \textcolor{gray}{68.20} & \textcolor{gray}{77.40} & \textcolor{gray}{57.80} & \textcolor{gray}{36.20} & \textcolor{gray}{57.10} &{\textcolor{gray}{98.4\%}} \\
        \midrule
        \OursRowColor\ours & 60.07 & 63.40 & 1773.86 & 86.42 & 68.76 & 77.11 & 57.68 & 36.33 & 59.21 & \textbf{98.7\%} \\

        \midrule  
        \rowcolor{mygray}
        \multicolumn{11}{c}{\textit{Retain 128 Tokens} \ $\fg{\downarrow 78\%}$}\\
        FastV & 49.60 & 56.10 & 1490.00 & 59.60 & 60.20 & 61.80 & 50.60 & 34.90 & 55.90 & 84.4\% \\
        SparseVLM & 56.00 & 60.00 & 1696.00 & 80.50 & 67.10 & 73.80 & 54.90 & 33.80 & 53.40 & 92.9\% \\
        VisionZip & 57.60 & 62.00 & 1761.70 & 83.20 & 68.90 & 75.60 & 56.80 & 37.90 & 54.90 & 96.8\% \\

        DivPrune & 59.25 & 62.03 & 1718.22 & 86.72 & 68.66 & 75.96 & 56.06 & 35.56 & 56.98 & 96.9\% \\
        \textcolor{gray}{VisionZip\,\smash{\scalebox{0.6}{\emoji{fire}}}} & \textcolor{gray}{58.90} & \textcolor{gray}{62.60} & \textcolor{gray}{1823.00} & \textcolor{gray}{83.70} & \textcolor{gray}{68.30} & \textcolor{gray}{76.60} & \textcolor{gray}{57.00} & \textcolor{gray}{37.30} & \textcolor{gray}{55.80} & \textcolor{gray}{97.7\%} \\
        \midrule
        \OursRowColor\ours & 59.29 & 62.29 & 1779.14 & 86.25 & 68.82 & 76.35 & 57.03 & 35.67 & 58.59 & \textbf{97.8\%} \\

        \midrule
        \rowcolor{mygray}
        \multicolumn{11}{c}{\textit{Retain 64 Tokens} \ $\fg{\downarrow 89\%}$}\\
        FastV & 46.10 & 48.00 & 1256.00 & 48.00 & 51.10 & 55.00 & 47.80 & 34.00 & 51.90 & 75.6\% \\
        SparseVLM & 52.70 & 56.20 & 1505.00 & 75.10 & 62.20 & 68.20 & 51.80 & 32.70 & 51.10 & 86.9\% \\
        VisionZip & 55.10 & 60.10 & 1690.00 & 77.00 & 69.00 & 72.40 & 55.50 & 36.20 & 52.20 & 93.2\% \\

        DivPrune & 57.78 & 59.28 & 1674.40 & 85.56 & 68.07 & 74.11 & 54.69 & 35.56 & 55.13 & 94.8\% \\
        \textcolor{gray}{VisionZip\,\smash{\scalebox{0.6}{\emoji{fire}}}} & \textcolor{gray}{57.00} & \textcolor{gray}{61.50} & \textcolor{gray}{1756.00} & \textcolor{gray}{80.90} & \textcolor{gray}{68.80} & \textcolor{gray}{74.20} & \textcolor{gray}{56.00} & \textcolor{gray}{35.60} & \textcolor{gray}{53.40} & \textcolor{gray}{95.0\%} \\
        \midrule
        \OursRowColor\ours & 58.29 & 61.17 & 1715.33 & 85.77 & 69.16 & 75.20 & 56.01 & 36.11 & 57.15 & \textbf{96.6\%} \\

        \bottomrule
    \end{tabular}%
    }
    \caption{\textbf{Performance Comparison on LLaVA-1.5-7B.} More details in Appendix \cref{tab:llava15_7b}.}
    \label{tab:llava15_7b_short}
    \vspace{-4pt}
\end{table}

\begin{table*}[t]
    \centering
    \renewcommand{\arraystretch}{1}
    \resizebox{\textwidth}{!}{%
    \begin{tabular}{c | c c c | c c c | c c c | c c c}
        \toprule
        \multirow{2}{*}{\makecell[c]{Method}} & \multicolumn{3}{c|}{\makecell[c]{LLaVA-1.5-7B  \citeyearpar{liu2023improvedllava}}} & \multicolumn{3}{c|}{\makecell[c]{LLaVA-1.5-13B  \citeyearpar{liu2023improvedllava}}} & \multicolumn{3}{c|}{\makecell[c]{LLaVA-NeXT-7B  \citeyearpar{liu2024llavanext}}} & \multicolumn{3}{c}{\makecell[c]{LLaVA-NeXT-13B  \citeyearpar{liu2024llavanext}}} \\
        \cmidrule(lr){2-4} \cmidrule(lr){5-7} \cmidrule(lr){8-10} \cmidrule(lr){11-13}
        ~ & \multicolumn{3}{c|}{576 tokens} & \multicolumn{3}{c|}{576 tokens} & \multicolumn{3}{c|}{Upper(Up.) 2880 tokens} & \multicolumn{3}{c}{Upper(Up.) 2880 tokens} \\
        \cmidrule(lr){2-4} \cmidrule(lr){5-7} \cmidrule(lr){8-10} \cmidrule(lr){11-13}
        Compress Ratio & \makecell[c]{$\fg{\downarrow 67\%}$} & \makecell[c]{$\fg{\downarrow 78\%}$} & \makecell[c]{$\fg{\downarrow 89\%}$} & \makecell[c]{$\fg{\downarrow 67\%}$} & \makecell[c]{$\fg{\downarrow 78\%}$} & \makecell[c]{$\fg{\downarrow 89\%}$} & \makecell[c]{$\fg{\downarrow 78\%}$} & \makecell[c]{$\fg{\downarrow 89\%}$} & \makecell[c]{$\fg{\downarrow 94\%}$} & \makecell[c]{$\fg{\downarrow 78\%}$} & \makecell[c]{$\fg{\downarrow 89\%}$} & \makecell[c]{$\fg{\downarrow 94\%}$} \\
        \cmidrule(lr){1-1}\cmidrule(lr){2-2} \cmidrule(lr){3-3} \cmidrule(lr){4-4} \cmidrule(lr){5-5} \cmidrule(lr){6-6} \cmidrule(lr){7-7} \cmidrule(lr){8-8} \cmidrule(lr){9-9} \cmidrule(lr){10-10} \cmidrule(lr){11-11} \cmidrule(lr){12-12} \cmidrule(lr){13-13}
        Remain Token & 192 & 128 & 64 & 192 & 128 & 64 & Up. 640 & Up. 320 & Up. 160 & Up. 640 & Up. 320 & Up. 160 \\
        \midrule

        VisionZip & 97.9\% & 96.8\% & 93.2\% & 97.9\% & 97.0\% & 93.7\% & 97.5\% & 94.5\% & 90.4\% & 97.7\% & 94.7\% & 91.4\% \\

        DivPrune & 98.0\% & 96.9\% & 94.8\% & 98.2\% & 96.9\% & 95.3\% & 97.1\% & 95.1\% & 92.4\% & 97.1\% & 94.5\% & 92.0\% \\
                
        \textcolor{gray}{VisionZip\,\smash{\scalebox{0.8}{\emoji{fire}}}} & \textcolor{gray}{98.4\%} & \textcolor{gray}{97.7\%} & \textcolor{gray}{95.0\%} & \textcolor{gray}{98.7\%} & \textcolor{gray}{97.4\%} & \textcolor{gray}{94.8\%} & \textcolor{gray}{98.9\%} & \textcolor{gray}{97.6\%} & \textcolor{gray}{95.0\%} & \textcolor{gray}{98.8\%} & \textcolor{gray}{97.8\%} & \textcolor{gray}{94.6\%} \\

        \midrule
        \OursRowColor \ours & \textbf{98.7\%} & \textbf{97.8\%} & \textbf{96.6\%} & \textbf{98.7\%} & \textbf{97.5\%} & \textbf{96.4\%} & \textbf{98.7\%} & \textbf{97.3\%} & \textbf{95.1\%} &  \textbf{98.2\%} & \textbf{96.4\%} & \textbf{95.1\%} \\
        \bottomrule
    \end{tabular}%
    }
    \caption{\textbf{Comparison on LLaVA-1.5 and LLaVA-NeXT.} Details are in Appendix~\cref{tab:llava15_7b,tab:llava15_13b,tab:llavanext_7b,tab:llavanext_13b}.}
    \label{tab:summary_transposed}
    \vspace{-3pt}
\end{table*}

\textbf{LLaVA-1.5-13B} The average performance over all benchmark datasets and token budgets is reported in \cref{tab:summary_transposed}, while detailed results can be found in Appendix~\cref{tab:llava15_13b}. 
Although the model is larger, the observation is similar to the above 7B counterpart, where our method consistently outperforms the baselines with a clear margin.

\textbf{LLaVA-NeXT 7B and 13B} In addition to models that have a fixed number of vision tokens, we further evaluate our method on LLaVA-NeXT~\citep{liu2024llavanext}, which dynamically samples up to five images and processes them individually, resulting in up to 2880 vision tokens. To align the comparison with real applications, we keep the dynamic settings as VisionZip~\citep{yang2025visionzip} and have token selection performed in a fixed ratio. For example, with a maximum budget of 160 tokens, we retain 32 tokens per image in up to five images ($32 \times 5 = 160$). The retained number of tokens becomes 128 if only four images are sampled by the VLMs according to the ratio of $160/2,880$. The same setting is used for all baselines as a fair comparison. As shown in \cref{tab:summary_transposed}, our method retains more than 95\% of the original performance using only 5.5\% of the tokens with a budget of 160 tokens, indicating substantial redundancy in vision tokens and the effectiveness of the proposed strategy. Detailed results can be found in Appendix~\cref{tab:llavanext_7b,tab:llavanext_13b}.

\begin{table*}[h]
\vspace{-2.5mm}
    \centering
    \renewcommand{\arraystretch}{1}
    \resizebox{0.85\textwidth}{!}{%
    \begin{tabular}{l | c c c c c |c c| c}
        \toprule
        \multirow{2}{*}{\textbf{Method}} & \textbf{GQA} & \textbf{MMB} & \textbf{MME} & \textbf{POPE} & \textbf{VQA}$^{\text{Text}}$ & \textbf{SQA} & \textbf{OCRBench} & \makecell[c]{\textbf{Avg.$\dagger$}}\\
        ~ & \makecell[c]{Acc. $\uparrow$} & \makecell[c]{Acc. $\uparrow$} & \makecell[c]{P+C $\uparrow$} & \makecell[c]{F1 $\uparrow$} & \makecell[c]{Acc. $\uparrow$} & \makecell[c]{Acc. $\uparrow$} & \makecell[c]{Acc. $\uparrow$} & \makecell[c]{$\uparrow$}\\
        \midrule
        
        \TokenColor
        \multicolumn{9}{c}{\textit{Dynamic Resolution (MinPix = 256 $\times$ 28 $\times$ 28, MaxPix = 2048 $\times$ 28 $\times$ 28),  Upper Bound} \ $\textbf{(100\%)}$}\\
        \TokenColor
        Avg. Tokens $\bar{T}$ & \footnotesize $358.5$ & \footnotesize $276.9$ & \footnotesize $867.6$ & \footnotesize $359.6$ & \footnotesize $976.5$ & \footnotesize $323.0$ & \footnotesize $652.8$ &  \\
        \midrule
        Qwen-2.5-VL-7B & 60.48 & 83.25 & 2327 & 86.16 & 77.72 & 87.46 & 83.80 & 100\% \\
        \midrule
        \TokenColor
        \multicolumn{9}{c}{\textit{Fixed Resolution (MinPix = MaxPix = 2048 $\times$ 28 $\times$ 28), Upper Bound} \ $\textbf{(100\%)}$}\\
        Qwen-2.5-VL-7B  & 58.59 & 83.59 & 2339 & 86.09 & 76.64 & 86.91 & 76.60 & 99.3\% \\
        \midrule

        \TokenColor
        \textit{Retain 20\% $\bar{T}$} & $71.7$ & $55.4$ & $173.5$ & $71.9$ & $195.3$ & $64.6$ & $130.6$ & $\fg{\downarrow \textbf{80\%}}$ \\
        VisionZip & 56.80 & 80.33 & 2174 & 83.38 & 70.43 & 84.23 & 59.50 & 94.2\% \\

        DivPrune & 56.70 & 76.98 & 2163 & 80.59 & 65.86 & 80.91 & 48.10 & 91.5\% \\
        \OursRowColor
        MMTok & 58.09 & 79.30 & 2217 & 82.38 & 70.49 & 81.61 & 59.60 & \textbf{94.6\%} \\

        \midrule

        \TokenColor
        \textit{Retain 10\% $\bar{T}$} & $35.9$ & $27.7$ & $86.8$ & $36.0$ & $97.7$ & $32.3$ & $65.3$ & $\fg{\downarrow \textbf{90\%}}$ \\
        VisionZip & 52.47 & 75.60 & 2003 & 78.90 & 63.78 & 82.30 & 36.90 & 87.5\% \\

        DivPrune & 53.43 & 72.85 & 1957 & 74.99 & 59.59 & 79.57 & 37.30 & 84.7\% \\
        \OursRowColor
        MMTok & 55.09 & 74.74 & 2051 & 78.75 & 63.90 & 80.47 & 43.60 & \textbf{88.5\%} \\
 
        \midrule
        \TokenColor
        \textit{Retain 5\% $\bar{T}$} & $17.9$ & $13.8$ & $43.4$ & $18.0$ & $48.8$ & $16.2$ & $32.6$ & $\fg{\downarrow \textbf{95\%}}$ \\
        VisionZip & 46.28 & 67.53 & 1677 & 66.38 & 54.49 & 79.57 & 19.70 & 75.4\% \\
        
        DivPrune & 49.01 & 65.89 & 1739 & 68.45 & 52.02 & 77.05 & 24.90 & 76.3\% \\
        \OursRowColor
        MMTok & 50.66 & 65.89 & 1796 & 71.35 & 55.95 & 77.19 & 30.70 & {\textbf{79.0\%}} \\

        \midrule
        \TokenColor
        \multicolumn{9}{c}{\textit{0 Token} \ $\fg{\downarrow \textbf{100\%}}$}\\
        Qwen-2.5-VL-7B& 31.84 & 20.10 & 935 & 0.00$^*$ & 38.93 & 71.10 & 1.80 & 33.8\% \\
        \bottomrule
     \end{tabular}%
     }
    \caption{\textbf{Comparison on Qwen-2.5-VL-7B.} Avg.$\dagger$~ are computed over 5 datasets. *When no visual tokens are provided, Qwen-2.5-VL outputs \texttt{"No"} for all questions, leading to 0\% F1. More detailed results are in Appendix~\cref{tab:qwen}.}
    \label{tab:qwen_short}
    \vspace{-2mm}
\end{table*}

\textbf{Qwen-2.5-VL-7B} Finally, we compare different algorithms on a more advanced VLM, that is, Qwen-2.5-VL-7B. Unlike previous work, it adopts dynamic resolution and a token-merging layer. Those strategies help reduce the total number of vision tokens while demonstrating better performance. For example, the average number of input tokens is only 359.6 in Qwen on POPE, significantly less than 2880 tokens in LLaVA-NeXT. Therefore, it is more challenging to apply the token selection algorithm in this strong model. Following experiments for LLaVA-NeXT, we conduct the evaluation under dynamic resolution for all methods. Due to distinct image pre-processing strategies in Qwen, we include 7 image datasets in this comparison. Since ScienceQA (SQA) is a low-IC dataset that will be discussed in \cref{sec:less_token} and all baselines perform poorly on OCRBench, the average performance is computed across the remaining 5 datasets. For \ours, we reduce $\tau_t$ to $0.01$ for all datasets while other parameters remained.

First, we compare the dynamic resolution to the fixed number of tokens in Qwen as shown in the first two rows of \cref{tab:qwen_short}. Although the model can use a fixed number of about 2,048 vision tokens for different tasks, the performance is worse than that of the dynamic strategy, which has much fewer tokens. It shows that vision tokens are quite redundant for VLM tasks, and the sophisticated strategies in Qwen already compress the number to hundreds, providing even better performance. Based on the challenging dynamic setting, we further investigate whether token selection is still valuable. From \cref{tab:qwen_short}, we can find that our MMTok can preserve nearly 95\% of the original performance while further reducing the number of vision tokens to 20\%. This observation demonstrates that even for models with token compression, the remaining tokens can still be redundant. The proposed method \ours~can effectively explore the most informative tokens and further reduce the number of vision tokens from hundreds to tens. Furthermore, our method is better than VisionZip with different budgets, which confirms the efficacy of our proposed multimodal coverage strategy. Finally, we can observe that even without any vision tokens, Qwen's performance on SQA is still close to its version with all tokens. This reminds us to investigate the contribution of vision to vision-language tasks in the next subsection, which can help to better evaluate the performance of token selection methods.

\subsection{Comparison on High IC Tasks with Limited Vision Tokens} \label{sec:less_token}

Although multimodal tasks rely on images for answers, the contribution of vision varies. \cref{tab:ic} summarizes the performance with/without vision tokens on different datasets. It is interesting to observe that even without any vision tokens, LLaVA-1.5 still preserves 92\% of the original performance on MMMU and 91\% on ScienceQA. Those tasks may fail to help adequately assess the efficacy of vision token selection. To mitigate the issue, we introduce \textbf{Image Contribution (IC)} to quantify the relative performance gain from all vision tokens, $IC = (\text{Perf}_{\text{All}} - \text{Perf}_{0}) / \text{Perf}_{0}$ and summarize IC values in \cref{tab:ic}. According to the table, we can identify 5 and 6 high-IC datasets for LLaVA and LLaVA-NeXT, respectively. Then, we compare different algorithms on those datasets in \cref{tab:standard_metrics_analysis}. To evaluate the performance with an extremely aggressive compression ratio, we extend the experiments from 64 tokens to 2 tokens. The comparison shows that our method can substantially preserve the informative vision tokens for VL tasks. Moreover, we illustrate the performance ratio compared to the original result in \cref{fig:teaser}. On POPE, our method maintains about 80\% original performance with only 2 vision tokens, showing the importance of appropriate vision tokens. More results can be found in Appendix \cref{tab:token_scaling,tab:token_scaling_llavanext_7b}.

\begin{table*}[htbp]
    \centering
    \renewcommand{\arraystretch}{1}
    \begin{minipage}{0.41\textwidth}
        \centering
        \resizebox{\textwidth}{!}{%
        \renewcommand{\arraystretch}{1}
        \begin{tabular}{l | c c | c c }
            \toprule
            \multirow{2}{*}{Dataset} & \multicolumn{2}{c|}{LLaVA-1.5-7B} & \multicolumn{2}{c}{LLaVA-NeXT-7B} \\
            \cmidrule(lr){2-3} \cmidrule(lr){4-5}
            ~ & All / Zero & IC & All / Zero & IC  \\
            \midrule
            MMB & 64.7/19.33 & \textbf{2.347} & 67.9/17.87 & \textbf{2.801} \\
            POPE & 85.9/44.64 & \textbf{0.924} & 86.4/25.84 & \textbf{2.344}  \\
            MME & 1862/970.89 & \textbf{0.918} & 1842/867 & \textbf{1.125}  \\
            SEED-I & 66.14/37.03 & \textbf{0.786} & 70.2/37.43 & \textbf{0.875}  \\
            GQA & 61.9/37.65 & \textbf{0.644} & 64.2/38.23 & \textbf{0.679}  \\
            \cmidrule(lr){2-3} 
            TextVQA  & 58.2/41.66 & 0.397 & 61.3/37.77 & \textbf{0.623} \\
            \cmidrule(lr){4-5}
            SQA & 69.5/63.51 & 0.094 & 70.2/64.60 & 0.087  \\
            MMMU & 36.3/33.33 & 0.089 & 35.1/31.56 & 0.112  \\
            \bottomrule
        \end{tabular}%
        }
        % \vspace{0.2cm}
        \caption{Demonstration of Image Contribution (IC).}
        \label{tab:ic}
    \end{minipage}
    \hfill
    % Right table: AVG Results
    \begin{minipage}{0.56\textwidth}
        \centering
        \resizebox{\textwidth}{!}{%
        \renewcommand{\arraystretch}{1}
        \begin{tabular}{l | c c c c c c}
            \toprule
            \multirow{2}{*}{Model/Method} & \multicolumn{6}{c}{ Different Token Budgets} \\
            \cmidrule(lr){2-7}
            ~ & 64 & 32 & 16 & 8 & 4 & 2 \\
            \midrule
            \rowcolor{mygray}
            \multicolumn{7}{c}{\textit{LLaVA-1.5-7B (MMB, POPE, MME, SEED, GQA)}} \\
            VisionZip & 90.0\% & 83.5\% & 69.7\% & 48.9\% & 43.8\% & 43.0\% \\
            DivPrune & 93.1\% & 89.6\% & 81.4\% & 68.4\% & 54.3\% & 50.1\% \\
            \OursRowColor MMTok (Ours) & \textbf{94.7\%} & \textbf{91.0\%} & \textbf{86.4\%} & \textbf{79.8\%} & \textbf{71.4\%} & \textbf{62.1\%} \\
            % \OursRowColor MMTok\dag (Ours) & \textbf{94.6\%} & \textbf{91.6\%} & \textbf{87.5\%} & \textbf{81.4\%} & \textbf{73.6\%} & \textbf{67.3\%} \\
            \midrule
            \rowcolor{mygray}
            \multicolumn{7}{c}{{\textit{LLaVA-NeXT-7B (MMB, POPE, MME, SEED, GQA, TextVQA)}} } \\
            VisionZip & 93.2\%&87.5\% & 76.8\% & 52.6\% & 39.4\% & 38.5\%  \\
            DivPrune & 94.2\% &89.7\% & 85.7\% & 79.9\% & 71.0\% & 57.8\%  \\
            \OursRowColor MMTok (Ours) & \textbf{95.7\%} & \textbf{92.8\%} & \textbf{89.6\%} & \textbf{85.2\%} & \textbf{79.2\%} & \textbf{71.0\%}  \\
            \bottomrule
        \end{tabular}%
        }
        % \vspace{0.2cm}  
        \caption{Comparison on high-IC tasks with different token budgets.}
        \label{tab:standard_metrics_analysis}
    \end{minipage}
    
\end{table*} 

\subsection{Ablation Study} \label{sec:ablation}

We conduct comprehensive ablation studies to demonstrate each component in \ours. All experiments are performed on LLaVA-1.5-7B with 64 tokens unless otherwise specified.

\begin{table*}[htbp]
    \centering
    \renewcommand{\arraystretch}{1}
    \resizebox{\textwidth}{!}{%
    \begin{tabular}{c | c c c c c c c c c}
        \toprule
        \makecell[c]{Multimodal} & GQA & MMB & MME & POPE & SQA & VQA$^{\text{Text}}$ & MMMU & SEED & \makecell[c]{Avg.}\\
        \makecell[c]{Coverage} & \makecell[c]{Acc. $\uparrow$} & \makecell[c]{Acc. $\uparrow$} & \makecell[c]{P+C $\uparrow$} & \makecell[c]{F1 $\uparrow$} & \makecell[c]{Acc. $\uparrow$} & \makecell[c]{Acc. $\uparrow$} & \makecell[c]{Acc. $\uparrow$} & \makecell[c]{Acc. $\uparrow$} & \makecell[c]{$\uparrow$}\\
        \midrule
        \rowcolor{mygray}
        \multicolumn{10}{c}{\textit{Total 576 Tokens} \ $\textbf{(100\%)}$}\\
        \makecell[c]{LLaVA-1.5-7B} & 61.9 & 64.7 & 1862 & 85.9 & 69.5 & 58.2 & 36.3 & 58.6 & 100.0\% \\
        \midrule
        
        \rowcolor{mygray}
        \multicolumn{10}{c}{\textit{Retain 64 Tokens} \ \fg{$\downarrow 88.9\%$}}\\

        T-V ($M^{tv}$)  & 56.82 & 59.62 & 1632.47 & 83.56 & \underline{68.72} & 51.97 & 35.33 & 56.36 & 93.8\%  \\

        V-V ($M^{vv}$) & \underline{58.14} &59.88 & 1662.34 & 83.43 &  {67.67} & 53.93 & 35.33 & \underline{56.90} & 94.7\%  \\

        Softmax T-V ($M^{tv'}$) & 56.66 & 58.85 & 1674.11 & 83.69 & 68.57 & 52.01 & 35.33 & 56.37 &93.9\%\\

        Softmax V-V ($M^{vv'}$) & 57.97 & \underline{60.31} & \underline{1684.33} & \underline{84.31} &  {68.07} & \underline{55.90} & \underline{35.89} & 56.88 &  \underline{95.7\%}  \\
        \midrule
        \OursRowColor
        \ours~ ($M^{tv'}+M^{vv'}$)& \textbf{58.29} & \textbf{61.17} & \textbf{1715.33} & \textbf{85.77} & \textbf{69.16} & \textbf{56.01} & \textbf{36.11} & \textbf{57.15} & \textbf{96.7\%} \\
        \bottomrule
    \end{tabular}%
    }
    \caption{\textbf{Ablation on multimodal coverage in \ours}. The best performance with token selection is highlighted in bold and the second-best is underlined.}
    \label{tab:ablation_coverage_modality}
\end{table*}

\textbf{Unimodal Coverage vs. Multimodal Coverage} The proposed method contains both text-vision coverage (T-V) and vision-vision coverage (V-V). We evaluate each component separately in \cref{tab:ablation_coverage_modality}. Compared with the original similarity matrix, the softmax variant can obtain a similar or even better performance, which shows that calibration on similarity matrices will not hurt the performance. Then, combining coverage optimization on different modalities shows an improvement of about 1\% over unimodal coverage, which demonstrates the complementarity between T-V and V-V coverages for various tasks. More ablation experiments can be found in the Appendix.

\textbf{Token Selection vs. Resizing} Resizing the image resolution is an effective way to reduce the total number of tokens as shown in \citep{yang2025visionthink}. We compare the resize strategy with token budgets in Table~\ref{tab:resizing}. First, we can find that selecting vision tokens from the original large image can be more effective than resizing under the same token budget. This is because resizing would ignore the redundancy between vision tokens. More importantly, we observe that incorporating with resizing, MMTok can work better than the counterpart with the same token budget on full images. For example, with about 10\% original tokens, MMTok with resizing can achieve 2170 on MME while that on original image is only 2051. Exploring resizing with token selection sufficiently can be our future work.

\begin{table}[h]
    \centering
    \renewcommand{\arraystretch}{1}
    \begin{tabular}{l | c c | c}
        \toprule
        \makecell[c]{Model} & Image Resize Ratio & \makecell[c]{Token Avg.} &   \makecell[c]{MME} P+C $\uparrow$ \\
        \midrule
        Qwen2.5-VL-7B &1 & 867.6 & 2327 \\
        \midrule
        \OursRowColor \ours & 1& 173.5 &   2217 \\
        \OursRowColor \ours & 1& 86.8  &   2051 \\
        \midrule
        \rowcolor{mygray}
        \multicolumn{4}{c}{Resize image to fixed ratio of original height and width, respectively }\\
        Qwen2.5-VL-7B & 1/2 & 459.1  & 2274 \\
        Qwen2.5-VL-7B & 1/4 &349.6    &  2238 \\
        Qwen2.5-VL-7B & 1/8 & 276.0 & 1793 \\
        \midrule
        \TokenColor
        \multicolumn{4}{c}{Retain \textit{55\%} Tokens on $1/4$ Resized Image}\\
        \OursRowColor \ours & 1/4 &192.3  &2254 \\
        \midrule
        \TokenColor
        \multicolumn{4}{c}{Retain \textit{40\%} Tokens on $1/4$ Resized Image}\\
        \OursRowColor \ours & 1/4  & 139.8  &2215 \\
        \midrule
        \TokenColor
        \multicolumn{4}{c}{Retain \textit{25\%} Tokens on $1/4$ Resized Image}\\
        \OursRowColor \ours & 1/4 & 87.4 &   2170  \\
        \bottomrule
    \end{tabular}%
    % }
    \caption{Comparison with resize strategy on MME with Qwen2.5-VL-7B. The original image is denoted as resize ratio of 1. Qwen has a default minimal number of vision tokens as 256 to obtain meaningful results.}
    \label{tab:resizing}
\end{table}

\begin{table}[htbp]
    \centering
    \renewcommand{\arraystretch}{1}
    \resizebox{\columnwidth}{!}{%
    \begin{tabular}{l | c| c| c c c | c c c c c c | c}
        \toprule
        \multirow{2}{*}{\makecell[c]{Model}} & \makecell[c]{Upper} & \makecell[c]{Total} & \makecell[c]{POPE} & \makecell[c]{GPU} & \makecell[c]{Memory} & \makecell[c]{POPE} & \makecell[c]{SEED} & \makecell[c]{TextVQA} & \makecell[c]{MME} & \makecell[c]{MMB} & \makecell[c]{GQA} & \makecell[c]{Avg.} \\
        ~ & \makecell[c]{Token} & \makecell[c]{Infer T(s)} & \makecell[c]{Infer T(s)} & \makecell[c]{Util.} & \makecell[c]{(+25.42 GB)} & \makecell[c]{F1} & \makecell[c]{Acc.} & \makecell[c]{Acc.} & \makecell[c]{P+C} & \makecell[c]{Acc.} & \makecell[c]{Acc.} & \makecell[c]{(\%)} \\
        \midrule
        \rowcolor{mygray}
        \multicolumn{13}{c}{\textit{H100 Single GPU Performance, Upper 2880 Tokens}}\\
        \midrule
        LLaVA-NeXT-13B & 2880 & 15204 & 1705 & 86.7\% & 4.59 & 86.22 & 71.89 & 64.33 & 1900.86 & 69.16 & 65.38 & 100.0 \\
        VisionZip &Upper 160 & 7551 & 866 & 52.4\% & 1.92 & 76.32 & 61.18 & 58.33 & 1738.24 & 64.78 & 57.77 & 89.6 \\
        DivPrune & Upper 160 & 8186 & 1060 & 50.9\% & 1.23 & 82.16 & 63.80 & 54.65 & 1699.83 & 64.78 & 59.34 & 90.5\\
        \midrule
        \ours & Upper 160 & 7768 & 913 & 58.0\% & 1.78& 85.11 & 65.45 & 55.91 & 1811.35 & 65.89 & 61.94 & 93.7 \\
        \bottomrule
    \end{tabular}%  
    }
    \caption{\textbf{Comparison of Inference Efficiency.} All results are reproduced under the same hardware and evaluation settings. The initial memory usage for loading the model is 25.42GB.}
    \label{tab:inference_time_pope}
\end{table}

\textbf{Inference Efficiency} Besides effectiveness, we examine efficiency in real scenarios. To mimic real applications, we report the total running time on different datasets in \cref{tab:inference_time_pope}. First, we profile the computational cost on POPE. Obviously, all token selection methods help reduce the utility percentage of the GPU by about 30\%, which shows that pruning is helpful for inference. Then, with a fixed memory cost of 25.42GB for model loading, these methods can also help reduce the usage of running-time memory by more than 58.2\% compared to the baseline. This reduction in computation and memory helps significantly improve the inference time on POPE, where both VisionZip and our method can reduce the running time by about 50\%. DivPrune runs a little bit slower due to a different strategy for handling multiple crops in its official code. Although our method introduces two subproblems, that is, T-V and V-V to optimize, the running time is almost the same as the fast unimodal method, i.e., VisionZip, which confirms the efficiency of \ours. The running time accumulated over 6 tasks demonstrates a similar observation, where the performance of \ours~is better than VisionZip by 4.1\%. This further demonstrates the efficacy and efficiency of our proposal.

\textbf{Visualization} While MMTok will leverage text information for vision token selection, the vision-vision coverage in our framework can help preserve the vision information and reuse the selected vision tokens for multi-turn conversation as shown in \cref{fig:multiturn}. We can observe that MMTok selects tokens with text from Q1 but vision-vision coverage enables the following questions to be answered correctly with the selected vision tokens. 

In addition, we also demonstrate how the answer changes with the number of selected tokens in \cref{fig:difficult_q}. Given a simple question as in the first example, only 8 vision tokens can obtain the right answer. However, for the challenging question as in the last example, the answer changes even with 16 tokens. The phenomenon implies that the appropriate number of vision tokens also depends on the hardness of the questions. Hardness-aware vision token selection is an interesting future direction.

\begin{figure}[h]
    \centering
    \begin{minipage}[t]{0.4\linewidth}
        \centering
        \includegraphics[width=0.88\linewidth]{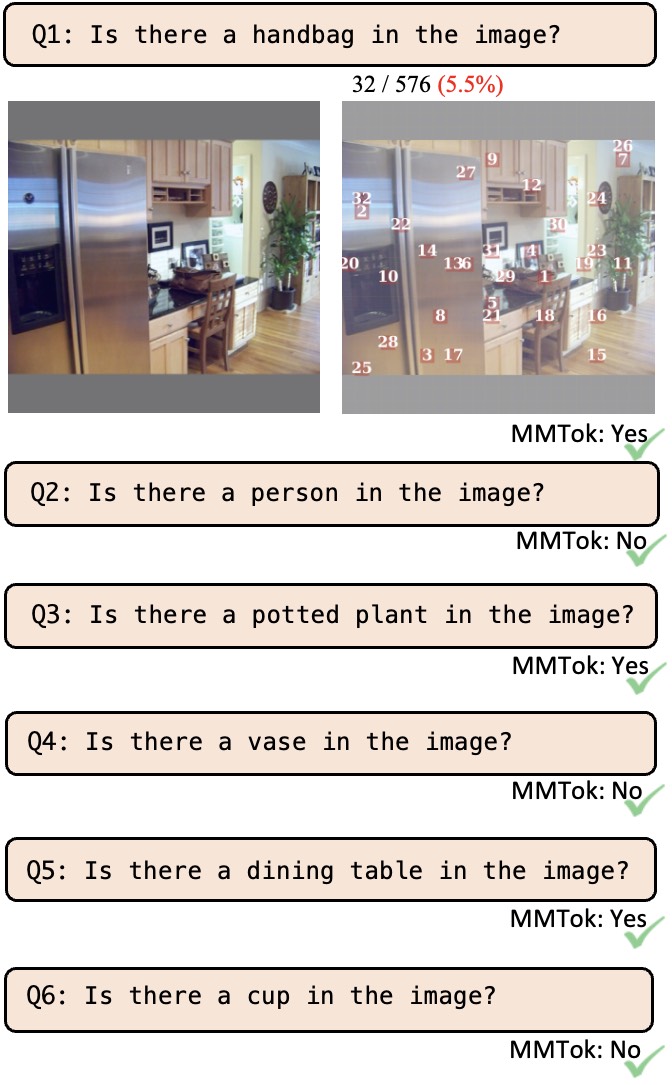}
        \caption{Multi-turn conversation by applying MMTok only with text from Q1.}
        \label{fig:multiturn}
    \end{minipage}
    \hspace{1pt}
    \begin{minipage}[t]{0.5\linewidth}
        \centering
        \includegraphics[width=0.9\linewidth]{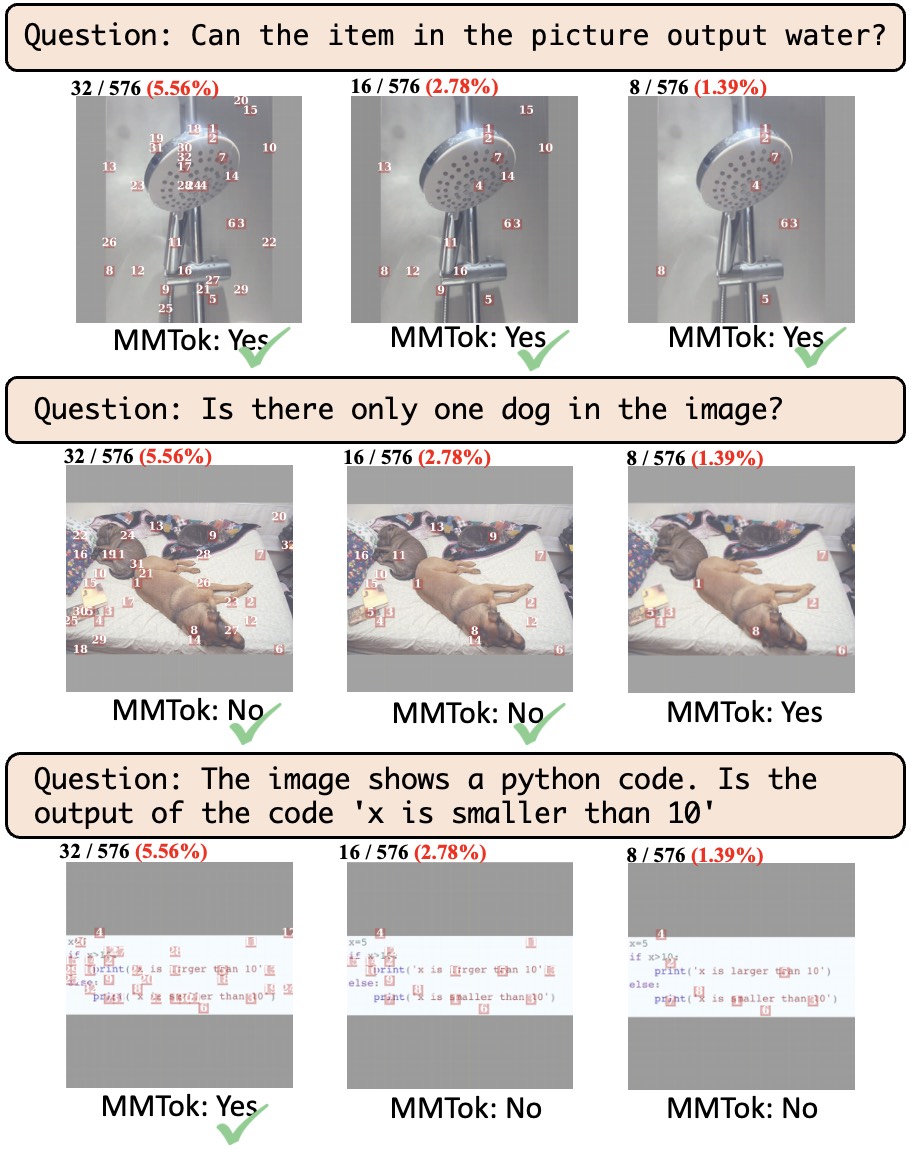}
        \caption{Answer changes with different number of tokens. Hard questions need more vision tokens.}
        \label{fig:difficult_q}
    \end{minipage}
\end{figure}

\vspace{-5mm}
\section{Conclusion}
\vspace{-2mm}
In this work, we propose a multimodal coverage framework, \ours, to guide vision token selection to accelerate the inference of VLMs in a training-free manner. Extensive experiments on benchmark datasets and representative VLMs demonstrate that our method outperforms the unimodal baselines without compromising efficiency. While text input may carry limited semantic information as a target for vision tokens to cover, a lightweight agent VLM can be leveraged to provide additional meaningful text tokens to guide the selection of the vision tokens, which can be the future direction.

\section{Ethics statement}
To the best of our knowledge, this work has no potential ethical issues to disclose.

\section{Reproducibility statement}
To ensure the reproducibility of this work, all implementation details have been clearly described, and we have also released the code.

\section{Acknowledgments}
The authors thank Dr. Yebowen Hu and Dr. Kaiqiang Song for their valuable discussions, as well as their respective assistance with experiments and computing resource scheduling. Dr. Juhua Hu is supported by Advata and EWA Gift Funding. Dr. Yanjie Fu is supported by the National Science Foundation (NSF) through grant numbers: 2426340, 2416727, 2421865, 2421803. All opinions, findings, conclusions, and recommendations in this work are those of the authors and do not necessarily reflect the views of the funding agencies.

% Add acknowledgments here if needed
%\newpage

\bibliography{refer}
\bibliographystyle{iclr2026_conference}

%%%%%%%%%%%%%%%%%%%%%%%%%%%%%%%%%%%%%%%%%%%%%%%%%%%%%%%%%%%%%%%%%%%%%%%%%%%%%%%
%%%%%%%%%%%%%%%%%%%%%%%%%%%%%%%%%%%%%%%%%%%%%%%%%%%%%%%%%%%%%%%%%%%%%%%%%%%%%%%
% APPENDIX
%%%%%%%%%%%%%%%%%%%%%%%%%%%%%%%%%%%%%%%%%%%%%%%%%%%%%%%%%%%%%%%%%%%%%%%%%%%%%%%
%%%%%%%%%%%%%%%%%%%%%%%%%%%%%%%%%%%%%%%%%%%%%%%%%%%%%%%%%%%%%%%%%%%%%%%%%%%%%%%
\newpage

\appendix
\section{LLM usage Statement}
We did not use LLM at all during the idea and writing stage of this work.

\section{Experiments}

\subsection{Adaptive Temperature $\tau_v^a$} 
To further improve the calibration between $M^{tv'}$ and $M^{vv'}$, an adaptive visual temperature can be applied for each example. Concretely, when fixing $\tau_t$, the maximal similarity between the target text tokens and the whole set of vision tokens can be obtained as $f(\N;M^{tv'})$, letting $\S=\N$. The desired temperature $\tau_v$ should lead to a similar magnitude for the vision-vision similarity. The optimization problem can be cast as
\[\min_{\tau_v^a} |f(\N;M^{tv'}) - f(\N;M^{vv'})|\]
For the default $f$, it is monotone to $\tau_v^a$, which can be solved efficiently by bisection search. However, the diagonal elements in $M^{vv'}$ can mislead the optimization due to their fixed value of $1$. To mitigate the issue, the $k$-th largest value is applied to search for the temperature as
\[\min_{\tau_v^a} |f(\N;M^{tv'}) - f_k(\N;M^{vv'})|;\quad f_k(\N;M^{vv'}) = \frac{1}{n}\sum_{i=1}^n \max_{k} M_{i,:}^{vv'}\]
Moreover, $f_k$ is not guaranteed to be a monotone function to $\tau_v^a$ , and we can search the value in $(\tau_t, \tau_v]$ as suggested in \citep{inmap}, where it shows that the temperature between vision-vision should be higher than that between text-vision due to the modality gap.

We perform the evaluation on high IC tasks in \cref{tab:adaptive_temperature}. As discussed above, the second largest value is adopted for searching the temperature in the set of $\{0.05,0.1,0.15,0.2\}$. While the variant with adaptive temperature, i.e., \oursadapt, shows a slightly better performance with a budget of 16 tokens, the results over different tasks are almost the same, demonstrating that our method is insensitive to hyperparameters.

\begin{table*}[h]
    \centering
    \renewcommand{\arraystretch}{1}
    \resizebox{0.7\textwidth}{!}{%
    \begin{tabular}{c | c c c c c c}
        \toprule
        \multirow{2}{*}{Method} & GQA & MMB & POPE & MME & SEED & \makecell[c]{Avg.}\\
        ~ & \makecell[c]{Acc. $\uparrow$} & \makecell[c]{Acc. $\uparrow$} & \makecell[c]{F1 $\uparrow$} & \makecell[c]{P+C $\uparrow$} & \makecell[c]{Acc. $\uparrow$} & \makecell[c]{$\uparrow$}\\
        \midrule
        \rowcolor{mygray}
        \multicolumn{7}{c}{\textit{Upper Bound: LLaVA-1.5-7B (576 Tokens)}}\\
        LLaVA-1.5 7B & 61.9 & 64.7 & 85.9 & 1862 & 58.6 & 100\% \\
        \midrule
        \rowcolor{mygray}
        \multicolumn{7}{c}{\textit{Retain 16 Tokens} \ $\fg{\downarrow 97.2\%}$}\\
        \ours & 53.31 & 54.30 & 79.79 & 1550.65 & 56.67 & 88.6\% \\
         \oursadapt & 53.31 & 54.30 & 79.83 & 1565.10 & 56.66 & 88.7\% \\

        \bottomrule
    \end{tabular}%
    }
    \caption{\textbf{Fixed vs. Adaptive Temperature.} Evaluation on LLaVA-1.5 7B with adaptive temperature $\tau_v^a \in \{0.05,0.1,0.15,0.2\}$.}
    \label{tab:adaptive_temperature}
\end{table*}

\subsection{Pooling Strategy for Text}
Given an LLM, each word can be tokenized into multiple tokens. To recover the semantic information of words, we may aggregate tokens from the same word. In this experiment, we explore different pooling strategies when computing T-V similarity. Concretely, we consider the pooling process either before or after computing the similarity matrix, where \texttt{Max}-pooling selects the token with the maximum feature value or similarity, \texttt{Mean}-pooling averages similarity over all tokens, and \texttt{First}-pooling simply retains the first token of a word. As shown in \cref{tab:word_pooling_ablation}, there is no pooling strategy that consistently yields the best performance across all eight datasets. Therefore, our method does not apply word pooling for simplicity.

\begin{table*}[t]
    \centering
    
    \renewcommand{\arraystretch}{1}
    \resizebox{\textwidth}{!}{%
    \begin{tabular}{c | c | c c c c c c c c |c}
        \toprule
        \makecell[c]{Pooling} & \multirow{2}{*}{\makecell[c]{Position}} & GQA & MMB & MME & POPE & SQA & VQA$^{\text{Text}}$ & MMMU & SEED & \makecell[c]{Avg.}\\
        \makecell[c]{Method} & & \makecell[c]{Acc. $\uparrow$} & \makecell[c]{Acc. $\uparrow$} & \makecell[c]{P+C $\uparrow$} & \makecell[c]{F1 $\uparrow$} & \makecell[c]{Acc. $\uparrow$} & \makecell[c]{Acc. $\uparrow$} & \makecell[c]{Acc. $\uparrow$} & \makecell[c]{Acc. $\uparrow$} & \makecell[c]{$\uparrow$}\\
        \midrule
        \rowcolor{mygray}
        \multicolumn{11}{c}{\textit{MMTok on LLaVA-1.5-7B with 64 Tokens (Baseline)}}\\
        None & - & 58.29 & \underline{61.17} & \textbf{1715} & \textbf{85.77} & \textbf{69.16} & \textbf{56.01} & \underline{36.11} & \underline{57.15} & \textbf{100.0\%} \\
        \midrule
        
        \rowcolor{mygray}
        \multicolumn{11}{c}{\textit{Pre-Pooling (Before Similarity Calculation)}}\\
        Mean & Pre & 58.01 & 61.00 & 1703 & \underline{85.75} & \underline{69.11} & 55.73 & 36.00 & 57.13 & 99.7\% \\
        Max & Pre & 58.26 & \underline{61.17} & 1704 & 85.64 & 68.82 & \underline{55.82} & 35.89 & 56.94 & 99.7\% \\
        First & Pre & \textbf{58.39} & \textbf{61.34} & 1709 & 85.67 & 68.77 & 55.76 & \underline{36.11} & 56.90 & \underline{99.8\%} \\
        \midrule
        
        \rowcolor{mygray}
        \multicolumn{11}{c}{\textit{Post-Pooling (After Similarity Calculation)}}\\
        Mean & Post & 58.20 & 61.00 & 1690 & 85.67 & 68.77 & 55.77 & \textbf{36.22} & \textbf{57.16} & 99.7\% \\
        Max & Post & \underline{58.36} & 61.00 & \underline{1711} & 85.61 & 68.77 & 55.68 & \textbf{36.22} & 57.04 & \underline{99.8\%} \\
        First & Post & \textbf{58.39} & \textbf{61.34} & 1709 & 85.67 & 68.77 & 55.76 & \underline{36.11} & 56.90 & \underline{99.8\%} \\
        \bottomrule
    \end{tabular}%
    }
    \caption{\textbf{Word token pooling strategies for token selection on LLaVA-1.5-7B.} Pre-pooling aggregates subword tokens before similarity computation, while post-pooling applies pooling afterward. 
    We evaluate three methods: \texttt{Mean} (average pooling), \texttt{Max} (maximum pooling), and \texttt{First} (first subword). 
    The baseline applies no pooling. The best is in bold and the second-best is underlined.}
    \label{tab:word_pooling_ablation}
\end{table*}

\subsection{MMTok for Reasoning Task}
To demonstrate the efficacy of MMTok on reasoning task, we conduct the experiments on the MMStar dataset~\citep{mmstar}. In Table~\ref{tab:mmstar_new}, we can observe that vision token selection can also help reasoning tasks. It should be noted that the major issue on these challenging tasks is that the original performance without selection is already quite low. Therefore, it is hard to show the significant difference when the upper-bound is limited. Nevertheless, our method is still better than baselines with a clear margin and by selecting 32 out of 576 tokens, MMTok is able to recover the performance of the baseline. 

\begin{table}[h]
    \centering
    \resizebox{\textwidth}{!}{%
    \begin{tabular}{c|ccccccc}
    \toprule
    \multirow{2}{*}{\makecell[c]{Method}} & \multicolumn{7}{c}{MMStar Metrics} \\
    \cmidrule(lr){2-8}
     & Coarse & Fine-Grained & Instance & Logical & Math & Sci\&Tech & Average \\
    \midrule
    % LLaVA-1.15 7B
    \rowcolor{mygray}
    \multicolumn{8}{c}{\textit{Baseline}} \\
    \multirow{1}{*}{\makecell[c]{LLaVA-1.5-7B}} & {63.63} & {25.63} & {38.89} & {28.92} & {26.60} & 18.48 & {33.69} \\
    \rowcolor{mygray}
    \multicolumn{8}{c}{\textit{64 Tokens}}\\
    VisionZip& 55.27 & 22.92 & {39.32} & 27.95 & 24.76 & {24.62} & 32.47 \\
    DivPrune& {56.35} & {19.50} & {36.72} & {27.73} & {26.70} & {18.94} & {30.99} \\
    \OursRowColor\multirow{1}{*}{\makecell[c]{Ours}} & {59.08} & {22.66} & {39.51} & {29.66} & {28.39} & {20.66} & {33.33} \\
    \midrule
    \rowcolor{mygray}
    \multicolumn{8}{c}{\textit{32 Tokens}}\\
    VisionZip& 48.58 & 19.04 & {39.73} & {29.69} & 22.91 & {21.95} & 30.32 \\
    DivPrune& {54.82} & {21.07} & {37.03} & {27.82} & {24.18} & {19.32} & {30.71} \\
    \OursRowColor\multirow{1}{*}{\makecell[c]{Ours}} & {59.56} & {25.71} & {40.49} & {29.94} & {27.92} & {17.37} & {33.50} \\
    \midrule
    \rowcolor{mygray}
    \multicolumn{8}{c}{\textit{16 Tokens}}\\
    VisionZip& 43.76 & {21.34} & 32.58 & 25.97 & 23.18 & {19.96} & 27.80 \\
    DivPrune& {49.99} & {21.45} & {38.37} & {28.45} & {21.54} & {18.58} & {29.73} \\
    \OursRowColor\multirow{1}{*}{\makecell[c]{Ours}} & {56.32} & {21.58} & {39.48} & {30.22} & {23.98} & {15.16} & {31.12} \\
    \midrule
    \rowcolor{mygray}
    \multicolumn{8}{c}{\textit{8 Tokens}}\\
    VisionZip& 27.93 & {21.26} & 25.71 & 21.84 & 20.18 & {17.83} & 22.46 \\
    DivPrune& {47.25} & {21.05} & {33.89} & {25.75} & {20.32} & {16.76} & {27.50} \\
    \OursRowColor\multirow{1}{*}{\makecell[c]{Ours}} & {54.11} & {21.26} & {35.09} & {29.56} & {20.34} & {15.04} & {29.23} \\
    \midrule
    \rowcolor{mygray}
    \multicolumn{8}{c}{\textit{0 Tokens}}\\
     \multirow{1}{*}{\makecell[c]{Baseline}} & {31.85} & {19.10} & {23.77} & {23.61} & {14.28} & {16.11} & {21.45} \\
    \midrule
    
    \bottomrule
    \end{tabular}%
    }
    \caption{Comparison on MMStar with LLaVA-1.5-7B.}
    \label{tab:mmstar_new}
    \end{table}

\subsection{Inference Efficiency for Qwen2.5-VL-7B}

Besides the evaluation in \cref{tab:inference_time_pope}, we also evaluate the inference efficiency of Qwen2.5-VL-7B in Table~\ref{tab:inference_time_qwen} using the MME task. We find that vision token selection can also accelerate the inference of state-of-the-art VLMs.

\begin{table}[h]
    \centering
    \renewcommand{\arraystretch}{1}
    \begin{tabular}{l | c c | c c }
        \toprule
        \makecell[c]{Model} & \makecell[c]{Token} & \makecell[c]{Inference } & GPU. &  Memory \\
        ~ & \makecell[c]{Avg.} & \makecell[c]{Time(s)} & util. & (+15.87GB) \\
        \midrule
        \rowcolor{mygray}
        \multicolumn{5}{c}{\textit{1 $\times$ A6000 GPU Performance on MME}}\\
        \midrule
        Qwen2.5-VL-7B & 867.6 & 675 & 77.0\% & 3.05 \\
        \midrule
        VisionZip & 86.8 & 508 & 66.3\% & 0.41 \\
        DivPrune & 86.8 & 423 & 55.1\% & 0.71 \\
        \OursRowColor \ours & 86.8 & 419 & 60.0\% & 0.71 \\
        \bottomrule
    \end{tabular}%
    % }
    \caption{\textbf{Comparison of Inference Efficiency on Qwen2.5-VL-7B}. The initial memory usage for loading the model is 15.87GB.}
    \label{tab:inference_time_qwen}
\end{table}

\subsection{Efficiency effect of the number of input or selected vision tokens}

In MMTok, the similarity matrix can be constructed efficiently using the pytorch built-in libraries. Then, for token selection, MMTok proposes a greedy algorithm only with max operations and can run in $O(kn)$ to pick $k$ vision tokens, where $n$ is the total number of vision tokens. Therefore, our method can scale well with vision tokens and with the number of selected vision tokens. In Table~\ref{tab:scaling_num_tokens}, we show the running time of MMTok with the varying number of input and selected vision tokens. We find that with the same number of input tokens, the running time is almost linear in the number of selected tokens, which confirms our analysis. Moreover, even with 2880 input tokens, the running time of MMTok is less than 7ms, which is negligible for real applications. It should also be noted that even for 2880 input tokens, the computation only costs about 13.93 GFLOPs.

\begin{table}[h]
\centering
\begin{tabular}{c c c | c c c | c c c}
\toprule
\#Input & \#Select  & Time(ms) & \#Input & \#Select  & Time(ms)& \#Input & \#Select  & Time(ms)\\
\midrule
2880 & 160  & 6.417 & 1728 & 96  & 3.862 & 576 &  32 & 1.267\\
2880 & 80  & 3.733 & 1728 & 48  & 2.247 & 576 &  16 & 0.774\\
\bottomrule
\end{tabular}
\caption{Running time (ms) of MMTok with different numbers of input and selected vision tokens on LLaVA-NeXT-7B. The reported result is averaged over 100 runs on a A6000 GPU.}\label{tab:scaling_num_tokens}
\end{table}

\subsection{Token Selection in Decoder}

Following the common practice, token selection has been conducted mainly after the vision encoder. In fact, token selection can also happen in the decoder. We conduct a preliminary experiment on the decoder in Table~\ref{tab:inference_decoder}. We try to further select the vision tokens from 160 to 80 for an intermediate layer (i.e., L-24) of the decoder in LLaVA-Next on MME. We can find that it can keep the similar performance and further improve the token efficiency. 

\begin{table}[h]
    \centering
    \renewcommand{\arraystretch}{1}
    % \resizebox{0.8\columnwidth}{!}{%
    \begin{tabular}{l | c | c}
        \toprule
        \makecell[c]{Model} & \makecell[c]{Upper Tokens} &  \makecell[c]{MME} (P+C) $\uparrow$\\
        \midrule
        LLaVA-Next-13B & 2880 & 1901   \\
        \midrule
        \TokenColor
        \OursRowColor  \ours & 160  &1811 \\
        \OursRowColor \ours  & 160$\Rightarrow$80 (L24)& 1846 \\
        \OursRowColor \ours  & 80 & 1717 \\
        \bottomrule
    \end{tabular}%
    % }
    \caption{Comparison with vision token selection during decoding. We have an additional vision token selection in the 24th layer of decoder.}
    \label{tab:inference_decoder}
\end{table}
 
\section{Improved MMTok}

Since LLaVA-1.5 does not fine-tune the vision tower and also does not mask padding patches, we explicitly exclude padding patches from the candidate token set and fix an overflow bug that wasted one token. As shown in \cref{tab:mmtokplus}, these changes substantially improve accuracy while using fewer~tokens. For a fair comparison, we report the results without the fix in the main text.

\begin{table*}[tbp]
    \centering
    \renewcommand{\arraystretch}{1.2}
    % \begin{minipage}{0.48\textwidth}
    \resizebox{0.6\textwidth}{!}{%
    \begin{tabular}{c | c c c c c | c}
        \toprule
        \multirow{2}{*}{\makecell[c]{\textbf{Method}}} & \textbf{GQA} & \textbf{MMB} & \textbf{MME} & \textbf{POPE} & \textbf{SEED-I} & \makecell[c]{\textbf{Avg}} \\
        ~ & \makecell[c]{Acc. $\uparrow$} & \makecell[c]{Acc. $\uparrow$} & \makecell[c]{P+C $\uparrow$} & \makecell[c]{F1 $\uparrow$} & \makecell[c]{Acc. $\uparrow$} & \makecell[c]{$\uparrow$} \\
        \midrule
        \rowcolor{mygray}
        \multicolumn{7}{c}{\textit{Vanilla Baseline (576 tokens)}}\\
        \multirow{1}{*}{\makecell[c]{LLaVA-1.5-7B}} & 61.9 & 64.7 & 1862 & 85.9 & 66.14 & 100.0\% \\

        \midrule

        \rowcolor{mygray}
        \multicolumn{7}{c}{\textit{32 Tokens}}\\

        \multirow{1}{*}{\ours} & 55.95 & 58.59 & 1625 & 82.95 & 59.81 & 91.0\% \\

        \multirow{1}{*}{\makecell[c]{\ours++}} & 56.61 & 58.76 & 1636 & 83.44 & 59.85 & 91.6\% \\
        \midrule
        
        \rowcolor{mygray}
        \multicolumn{7}{c}{\textit{16 Tokens}}\\

        \multirow{1}{*}{\ours} & 53.31 & 54.30 & 1551 & 79.79 & 56.67 & 86.4\% \\

        \multirow{1}{*}{\makecell[c]{\ours++}} & 54.05 & 54.98 & 1581 & 80.79 & 57.13 & 87.5\% \\
        \midrule
        
        \rowcolor{mygray}
        \multicolumn{7}{c}{\textit{8 Tokens}}\\

        \multirow{1}{*}{\ours} & 49.06 & 49.06 & 1355 & 78.46 & 52.74 & 79.8\%  \\

        \multirow{1}{*}{\makecell[c]{\ours++}} & 50.80 & 49.31 & 1395 & 79.75 & 53.59 & 81.4\%  \\
        \midrule
        
        \rowcolor{mygray}
        \multicolumn{7}{c}{\textit{4 Tokens}}\\

        \multirow{1}{*}{\ours} & 43.93 & 36.94 & 1290 & 74.84 & 48.10 & 71.4\%  \\

        \multirow{1}{*}{\makecell[c]{\ours++}} & 45.08 & 40.21 & 1294 & 76.36 & 49.34 & 73.6\% \\
        \midrule
        
        \rowcolor{mygray}
        \multicolumn{7}{c}{\textit{2 Tokens}}\\

        \multirow{1}{*}{\ours} & 40.58 & 25.69 & 1122 & 68.95 & 42.89 & 62.1\%  \\

        \multirow{1}{*}{\makecell[c]{\ours++}} & 42.18 & 31.36 & 1237 & 72.97 & 45.27 & 67.3\%\\
        \midrule
        
        \rowcolor{mygray}
        \multicolumn{7}{c}{\textit{0 Tokens}}\\
        \multirow{1}{*}{\makecell[c]{Baseline}} & 37.65 & 19.33 & 971 & 44.64 & 37.03 & 50.2\% \\
        \bottomrule
    \end{tabular}%
    }
    \caption{Evaluate \ours++ on LLaVA-1.5-7B with Extremely Less Token Budgets. 
    }
    \label{tab:mmtokplus}
    % \end{minipage}
\end{table*}

\section{Selected Tokens Visualization} \label{sec:sub_visualize}
To provide an intuitive understanding of our token selection process, we visualize the selected tokens and their nearest words in \cref{fig:vis} and compare them with the diversity-based method, DivPrune. From the columns (b) and (c), we can observe that MMTok selects top patches according to the word to patch similarity, which aligns well with the question semantically. In contrast, as shown in columns (d) and (e), DivPrune selected top patches without any close semantic relation to the question. This further demonstrates that MMTok can help significantly reduce the number of tokens without losing the semantic relation to the questions, so as to provide better performance.

\begin{figure}[ht!]
    \centering
    \includegraphics[width=\textwidth]{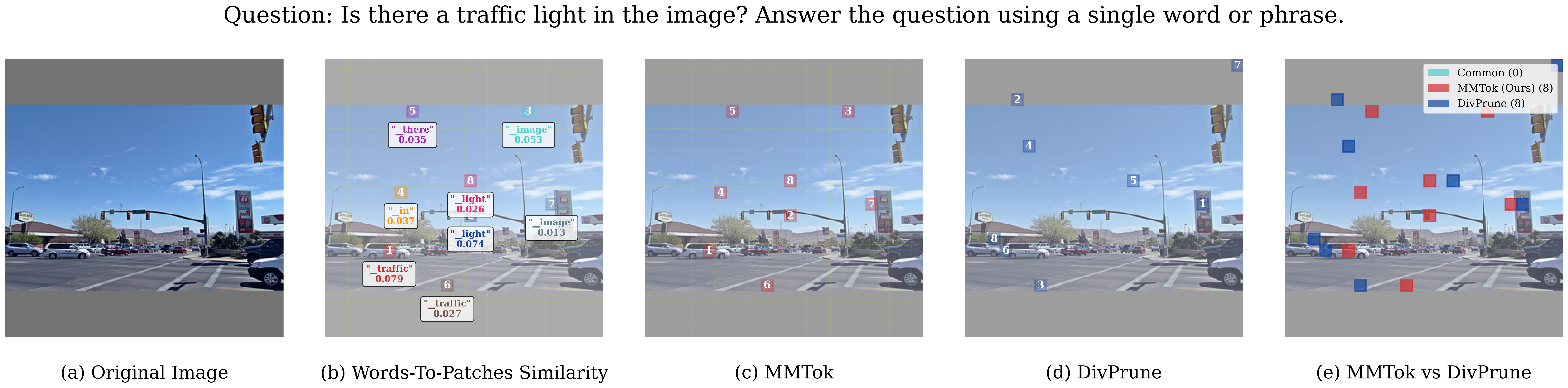}
    \vfill
    \includegraphics[width=\textwidth]{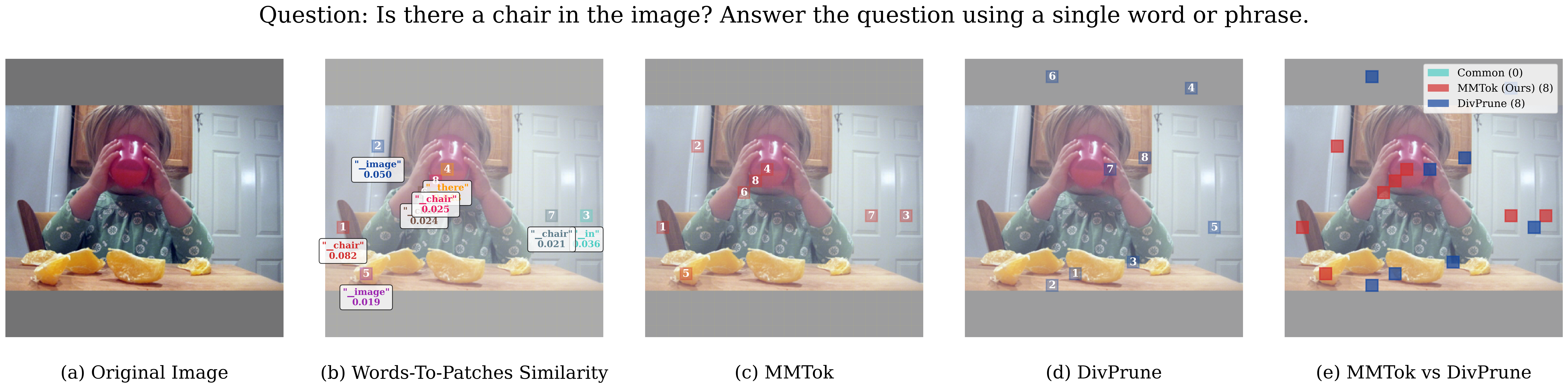}
    \vfill
    \includegraphics[width=\textwidth]{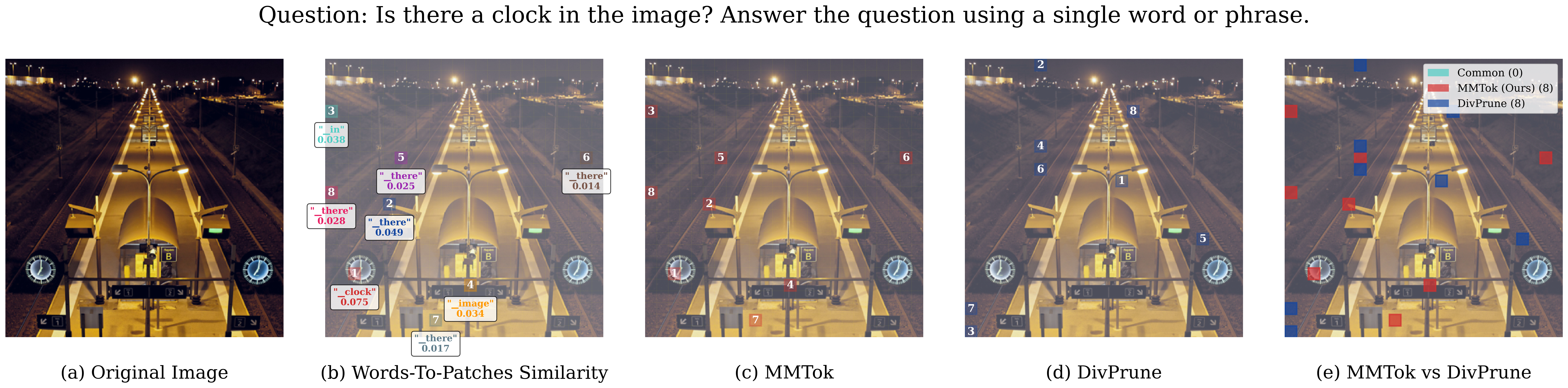}
    \vfill
    \includegraphics[width=\textwidth]{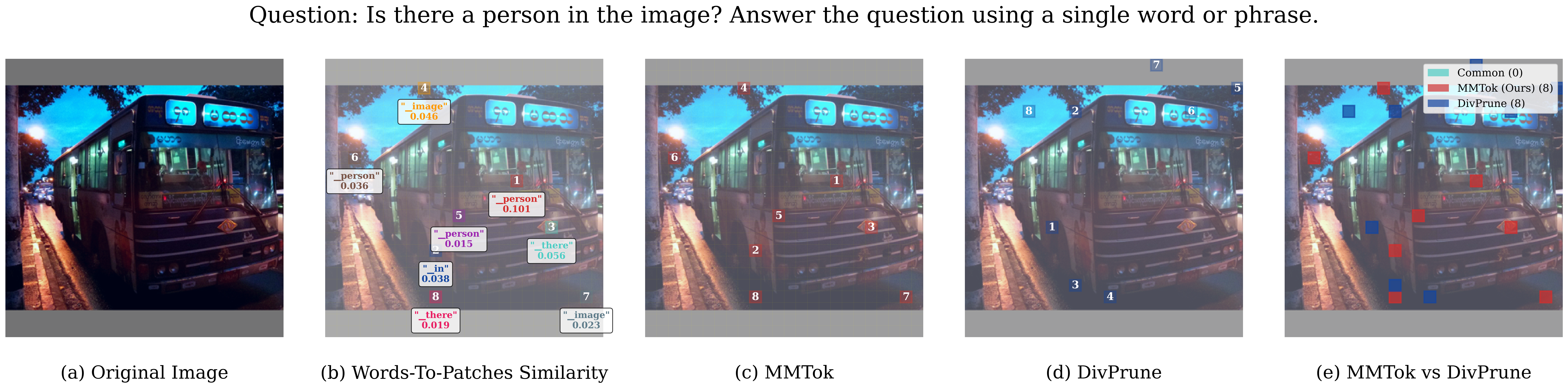}
    \vfill
    \includegraphics[width=\textwidth]{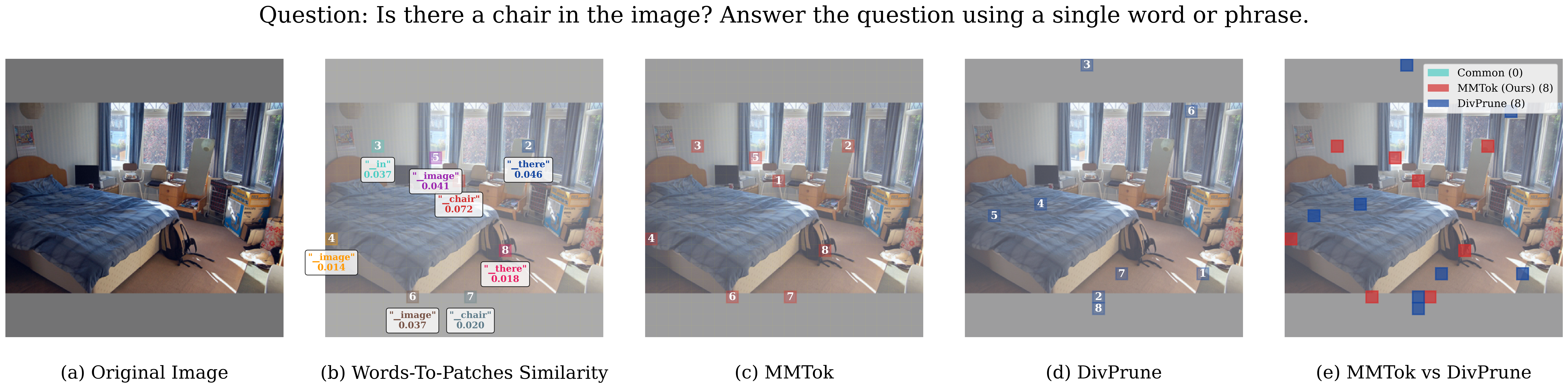}
    \vfill

    \includegraphics[width=\textwidth]{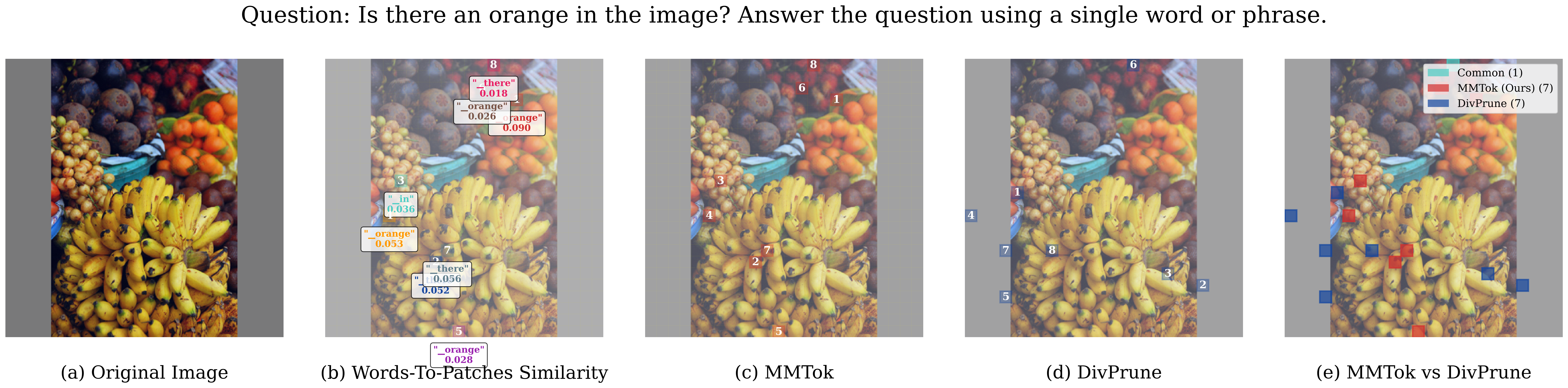}

    \caption{\textbf{Visualization of Selected Tokens.} Compared with the diversity-based method, DivPrune, our method selected token coverage the necessary token associate to language context.}
    \label{fig:vis}
\end{figure}

\section{Complete Empirical Results} 
This section shows per-dataset results for all models and token budgets, 
including LLaVA-1.5 (7B/13B) (\cref{tab:llava15_7b,tab:llava15_13b}), LLaVA-NeXT (7B/13B) (\cref{tab:llavanext_7b,tab:llavanext_13b}), and Qwen-2.5-VL (7B \cref{tab:qwen}).
We report both raw scores and percentage retention relative to the full-token setting. 
We also report results with an extremely low number of tokens on LLaVA-1.5-7B and LLaVA-NeXT-7B (\cref{tab:token_scaling,tab:token_scaling_llavanext_7b}).

\begin{table*}[htbp]
    \centering
    % \footnotesize
    \renewcommand{\arraystretch}{1}
    \resizebox{\textwidth}{!}{%
    \begin{tabular}{c | c c c c c c c c c c}
        \toprule
        \multirow{2}{*}{\makecell[c]{\textbf{Method}}} & \textbf{GQA} & \textbf{MMB} & \textbf{MME} & \textbf{POPE} & \textbf{SQA} & \textbf{VQA}$^{\text{V2}}$ & \textbf{VQA}$^{\text{Text}}$ & \textbf{MMMU}& \textbf{SEED} &\makecell[c]{\textbf{Avg}.}\\
        ~ & \makecell[c]{Acc. $\uparrow$} & \makecell[c]{Acc. $\uparrow$} & \makecell[c]{P+C $\uparrow$} & \makecell[c]{F1 $\uparrow$} & \makecell[c]{Acc. $\uparrow$} & \makecell[c]{Acc. $\uparrow$} & \makecell[c]{Acc. $\uparrow$} & \makecell[c]{Acc. $\uparrow$} & \makecell[c]{Acc. $\uparrow$} & \makecell[c]{$\uparrow$}\\
        \midrule
        \rowcolor{mygray}
        \multicolumn{11}{c}{\textit{Upper Bound, 576 Tokens} \ $\textbf{(100\%)}$}\\
        \multirow{2}{*}{\makecell[c]{LLaVA-1.5\\Vanilla 7B}} & 61.9 & 64.7 & 1862 & 85.9 & 69.5 & 78.5 & 58.2 & 36.3 & 58.6 & \multirow{2}{*}{100\%} \\
        ~ & 100\% & 100\% & 100\% & 100\% & 100\% & 100\% & 100\% & 100\% & 100\%  & ~ \\
        \midrule

        \rowcolor{mygray}
        \multicolumn{11}{c}{\textit{Retain 192 Tokens} \ $\fg{\downarrow 66.7\%}$} \\

        \multirow{2}{*}{\makecell[c]{FastV\\\citeyearpar{chen2024image}}} & 52.7 & 61.2 & 1612 & 64.8 & 67.3 & 67.1 & 52.5 & 34.3 & 57.1 & \multirow{2}{*}{89.6\%} \\
        ~ & 85.1\% & 94.6\% & 86.6\% & 75.4\% & 96.8\% & 85.5\% & 90.2\% & 94.5\% &97.4\% & ~ \\
        \midrule
        \multirow{2}{*}{\makecell[c]{SparseVLM\\\citeyearpar{zhang2024sparsevlm}}} & 57.6 & 62.5 & 1721 & 83.6 & 69.1 & 75.6 & 56.1 & 33.8 & 55.8 & \multirow{2}{*}{95.5\%} \\
        ~ & 93.1\% & 96.6\% & 92.4\% & 97.3\% & 99.4\% & 96.3\% & 96.4\% & 93.1\% & 95.2\%  & ~\\
        \midrule
        \multirow{2}{*}{\makecell[c]{VisionZip\\\citeyearpar{yang2025visionzip}}} & 59.3 & 63.0 & 1782.6 & 85.3 & 68.9 & 76.8 &57.3 & 36.6 &56.4 & \multirow{2}{*}{97.9\%} \\
        ~ & 95.8\% & 97.4\% & 95.7\% & 99.3\% &99.1\% & 97.8\% & 98.5\% & 100.8\% & 96.2\% & ~\\

        \midrule
        \multirow{2}{*}{\makecell[c]{DivPrune\\\citeyearpar{alvar2025divprune}}} & 59.97 & 62.54 & 1762.23 & 87.00 & 68.66 & 76.87 & 56.97 & 35.44 & 58.71 & \multirow{2}{*}{98.0\%} \\
        ~ & 96.9\% & 96.7\% & 94.6\% & 101.3\% & 98.8\% & 97.9\% & 97.9\% & 97.6\% & 100.2\% & ~ \\
        \midrule
        \multirow{2}{*}{\makecell[c]{\textcolor{gray}{VisionZip\,\smash{\scalebox{0.6}{\emoji{fire}}}}\\\textcolor{gray}{\citeyearpar{yang2025visionzip}}}} & \textcolor{gray}{60.1} & \textcolor{gray}{63.4} & \textcolor{gray}{1834} & \textcolor{gray}{84.9} & \textcolor{gray}{68.2} & \textcolor{gray}{77.4} & \textcolor{gray}{57.8} & \textcolor{gray}{36.2} & \textcolor{gray}{57.1} & \multirow{2}{*}{\textcolor{gray}{98.4\%}} \\
        ~ & \textcolor{gray}{97.1\%} & \textcolor{gray}{98.0\%} & \textcolor{gray}{98.5\%} & \textcolor{gray}{98.8\%} & \textcolor{gray}{98.1\%} & \textcolor{gray}{98.6\%} & \textcolor{gray}{99.3\%} & \textcolor{gray}{99.7\%} & \textcolor{gray}{97.4\%} &  ~\\
        \midrule
        \OursRowColor & 60.07 & 63.40 & 1773.86 & 86.42 & %68.96 
        68.76& 77.11 & 57.68 & 36.33 & 59.21 & \\
        \OursRowColor\multirow{-2}{*}{\makecell[c]{\ours\\(Ours)}} & 97.0\% & 98.0\% & 95.3\% & 100.6\% & 98.9\% & 98.2\% & 99.1\% & 100.1\% & 101.0\% & \multirow{-2}{*}{\textbf{98.7\%}} \\
        \midrule

        \rowcolor{mygray}
        \multicolumn{11}{c}{\textit{Retain 128 Tokens} \ $\fg{\downarrow 77.8\%}$}\\
        \multirow{2}{*}{\makecell[c]{FastV\\\citeyearpar{chen2024image}}} & 49.6 & 56.1 & 1490 & 59.6 & 60.2 & 61.8 & 50.6 & 34.9 & 55.9 &  \multirow{2}{*}{84.4\%}\\
        ~ & 80.1\% & 86.7\% & 80.0\% & 69.4\% & 86.6\% & 78.7\% & 86.9\% & 96.1\% & 95.4\%  & ~\\
        \midrule
        \multirow{2}{*}{\makecell[c]{SparseVLM\\\citeyearpar{zhang2024sparsevlm}}} & 56.0 & 60.0 & 1696 & 80.5 & 67.1 & 73.8 & 54.9 & 33.8 & 53.4 &   \multirow{2}{*}{92.9\%}\\
        ~ & 90.5\% & 92.7\% & 91.1\% & 93.7\% & 96.5\% & 94.0\% & 94.3\% & 93.1\% & 91.1\%& ~ \\
        \midrule
        \multirow{2}{*}{\makecell[c]{VisionZip\\\citeyearpar{yang2025visionzip}}} & 57.6 & 62.0 & 1761.7 & 83.2 & 68.9 & 75.6  &56.8 & 37.9& 54.9& \multirow{2}{*}{96.8\%} \\
        ~ & 93.1\% & 95.8\% & 94.6\% & 96.9\% &99.1\% & 96.3\% & 97.6\% & 104.4\% &93.7\% & ~ \\

        \midrule
        \multirow{2}{*}{\makecell[c]{DivPrune\\\citeyearpar{alvar2025divprune}}} & 59.25 & 62.03 & 1718.22 & 86.72 & 68.66 & 75.96 & 56.06 & 35.56 & 56.98 & \multirow{2}{*}{96.9\%} \\
        ~ & 95.7\% & 95.9\% & 92.3\% & 101.0\% & 98.8\% & 96.8\% & 96.3\% & 98.0\% & 97.3\% & ~ \\
        \midrule
        \multirow{2}{*}{\makecell[c]{\textcolor{gray}{VisionZip\,\smash{\scalebox{0.6}{\emoji{fire}}}}\\\textcolor{gray}{\citeyearpar{yang2025visionzip}}}} 
        & \textcolor{gray}{58.9} & \textcolor{gray}{62.6} & \textcolor{gray}{1823} & \textcolor{gray}{83.7} & \textcolor{gray}{68.3} & \textcolor{gray}{76.6} & \textcolor{gray}{57.0} & \textcolor{gray}{37.3} & \textcolor{gray}{55.8} & \textcolor{gray}{97.7\%}\\
        
        ~ & \textcolor{gray}{95.2\%} & \textcolor{gray}{96.8\%} & \textcolor{gray}{97.9\%} & \textcolor{gray}{97.4\%} & \textcolor{gray}{98.3\%} & \textcolor{gray}{97.6\%} & \textcolor{gray}{97.9\%} & \textcolor{gray}{102.8\%} & \textcolor{gray}{95.2\%} &  ~\\
        \midrule
        \OursRowColor & 59.29 & 62.29 & 1779.14 & 86.25 & 
        68.82& 76.35 & 57.03 & 35.67 & 58.59 & \\
        \OursRowColor\multirow{-2}{*}{\makecell[c]{\ours\\(Ours)}} & 95.8\% & 96.3\% & 95.5\% & 100.4\% & 99.0\% & 97.3\% & 98.0\% & 98.3\% & 100.0\% & \multirow{-2}{*}{\textbf{97.8\%}} \\

        \midrule
        \rowcolor{mygray}
        \multicolumn{11}{c}{\textit{Retain 64 Tokens} \ $\fg{\downarrow 88.9\%}$}\\
        \multirow{2}{*}{\makecell[c]{FastV\\\citeyearpar{chen2024image}}} & 46.1 & 48.0 & 1256 & 48.0 & 51.1 & 55.0 & 47.8 & 34.0 & 51.9  & \multirow{2}{*}{75.6\%}\\
        ~ & 74.5\% & 74.2\% & 67.5\% & 55.9\% & 73.5\% & 70.1\% & 82.1\% & 93.7\% & 88.6\% & ~\\
        \midrule
        \multirow{2}{*}{\makecell[c]{SparseVLM\\\citeyearpar{zhang2024sparsevlm}}} & 52.7 & 56.2 & 1505 & 75.1 & 62.2 & 68.2 & 51.8 & 32.7 & 51.1& \multirow{2}{*}{86.9\%} \\
        ~ & 85.1\% & 86.9\% & 80.8\% & 87.4\% & 89.4\% & 86.9\% & 89.0\% & 90.1\%& 87.2\% & ~\\
        \midrule
        \multirow{2}{*}{\makecell[c]{VisionZip\\\citeyearpar{yang2025visionzip}}} & 55.1 & 60.1 & 1690 & 77.0 & 69.0 & 72.4  &55.5 & 36.2 & 52.2 & \multirow{2}{*}{93.2\%} \\
        ~ & 89.0\% & 92.9\% & 90.8\% & 89.6\% &99.3\% & 92.2\% & 95.4\% & 99.7\% & 89.1\% & ~ \\

        \midrule
        \multirow{2}{*}{\makecell[c]{DivPrune\\\citeyearpar{alvar2025divprune}}} & 57.78 & 59.28 & 1674.4 & 85.56 & 68.07 & 74.11 & 54.69 & 35.56 & 55.13 & \multirow{2}{*}{94.8\%} \\
        ~ & 93.3\% & 91.6\% & 89.9\% & 99.6\% & 97.9\% & 94.4\% & 94.0\% & 98.0\% & 94.1\% & ~ \\
        \midrule
        \multirow{2}{*}{\makecell[c]{\textcolor{gray}{VisionZip\,\smash{\scalebox{0.6}{\emoji{fire}}}}\\\textcolor{gray}{\citeyearpar{yang2025visionzip}}}} & \textcolor{gray}{57.0} & \textcolor{gray}{61.5} & \textcolor{gray}{1756} & \textcolor{gray}{80.9} & \textcolor{gray}{68.8} & \textcolor{gray}{74.2} & \textcolor{gray}{56.0} & \textcolor{gray}{35.6} & \textcolor{gray}{53.4} & \multirow{2}{*}{\textcolor{gray}{95.0\%}} \\
        ~ & \textcolor{gray}{92.1\%} & \textcolor{gray}{95.1\%} & \textcolor{gray}{94.3\%} & \textcolor{gray}{94.2\%} & \textcolor{gray}{99.0\%} & \textcolor{gray}{94.5\%} & \textcolor{gray}{96.2\%} & \textcolor{gray}{98.1\%} & \textcolor{gray}{91.1\%} & ~\\
        \midrule
        \OursRowColor & 58.29 & 61.17 & 1715.33 & 85.77 & 
        69.16& 75.20 & 56.01 & 36.11 & 57.15 & \\
        \OursRowColor\multirow{-2}{*}{\makecell[c]{\ours\\(Ours)}} & 94.2\% & 94.5\% & 92.1\% & 99.9\% & 99.5\% & 95.8\% & 96.3\% & 99.5\% & 97.5\% & \multirow{-2}{*}{{\textbf{96.6\%}}} \\

         \bottomrule
     \end{tabular}%
     }
    \caption{\textbf{Performance Comparison on LLaVA-1.5-7B.} The vanilla number of visual tokens is $576$. The first line of each method shows the raw benchmark accuracy, and the second line is the proportion relative to the upper limit. The last column is the average value.}
    \label{tab:llava15_7b}
\end{table*}  

\begin{table*}[htbp]
    \centering
    \renewcommand{\arraystretch}{1.2}
    \resizebox{\textwidth}{!}{%
    \begin{tabular}{c | c c c c c c c c c c c}
        \toprule
        \multirow{2}{*}{\makecell[c]{\textbf{Method}}} & \textbf{GQA} & \textbf{MMB} & \textbf{MME} & \textbf{POPE} & \textbf{SQA} & \textbf{VQA}$^{\text{V2}}$ & \textbf{VQA}$^{\text{Text}}$ & \textbf{MMMU}& \textbf{SEED-I} & \textbf{SEED}$^*$ &\makecell[c]{\textbf{Avg}.}\\
        ~ & \makecell[c]{Acc. $\uparrow$} & \makecell[c]{Acc. $\uparrow$} & \makecell[c]{P+C $\uparrow$} & \makecell[c]{F1 $\uparrow$} & \makecell[c]{Acc. $\uparrow$} & \makecell[c]{Acc. $\uparrow$} & \makecell[c]{Acc. $\uparrow$} & \makecell[c]{Acc. $\uparrow$} & \makecell[c]{Acc. $\uparrow$} & \makecell[c]{Acc. $\uparrow$} & \makecell[c]{$\uparrow$}\\
        \midrule
        \rowcolor{mygray}
        \multicolumn{12}{c}{\textit{Upper Bound, 576 Tokens} \ $\textbf{(100\%)}$}\\
        \multirow{2}{*}{\makecell[c]{LLaVA-1.5\\Vanilla 13B}} & 63.2 & 67.7 & 1818 & 85.9 & 72.8 & 80.0 & 61.3 & 36.4 & 66.9 & 61.6 & \multirow{2}{*}{100\%} \\
        ~ & 100\% & 100\% & 100\% & 100\% & 100\% & 100\% & 100\% & 100\% & 100\% & 100\%  & ~ \\
        \midrule

        \rowcolor{mygray}
        \multicolumn{12}{c}{\textit{Retain 192 Tokens} \ $\fg{\downarrow 66.7\%}$} \\

        \multirow{2}{*}{\makecell[c]{VisionZip\\\citeyearpar{yang2025visionzip}}} & 59.1 & 66.9 & 1754 & 85.1 & 73.5 & 78.1 &59.5 & 36.4 &65.2 & 61.20$\dagger$ & \multirow{2}{*}{97.9\%} \\
        ~ & 93.5\% & 98.8\% & 96.5\% & 99.1\% &101.0\% & 97.6\% & 97.1\% & 100\% & 97.5\% & 99.4\% & ~\\

        \midrule
        \multirow{2}{*}{\makecell[c]{DivPrune\\\citeyearpar{alvar2025divprune}}} & 59.42 & 66.58 & 1781.50 & 86.76 & 73.03 & 77.98 & 58.46 & 36.56 & 65.72 & 60.83 & \multirow{2}{*}{98.2\%} \\
        ~ & 94.0\% & 98.3\% & 98.0\% & 101.0\% & 100.3\% & 97.5\% & 95.4\% & 100.4\% & 98.2\% & 98.8\% & ~ \\
        
        \midrule
        \multirow{2}{*}{\makecell[c]{\textcolor{gray}{VisionZip\,\smash{\scalebox{0.8}{\emoji{fire}}}}\\\textcolor{gray}{\citeyearpar{yang2025visionzip}}}} & \textcolor{gray}{61.6} & \textcolor{gray}{67.1} & \textcolor{gray}{1790} & \textcolor{gray}{84.5} & \textcolor{gray}{72.7} & \textcolor{gray}{78.6} & \textcolor{gray}{59.9} & \textcolor{gray}{36.4} & \textcolor{gray}{66.1} & \textcolor{gray}{--} & \multirow{2}{*}{\textcolor{gray}{98.7\%}} \\
        ~ & \textcolor{gray}{97.5\%} & \textcolor{gray}{99.1\%} & \textcolor{gray}{98.5\%} & \textcolor{gray}{98.4\%} & \textcolor{gray}{99.9\%} & \textcolor{gray}{98.3\%} & \textcolor{gray}{97.7\%} & \textcolor{gray}{100\%} & \textcolor{gray}{98.8\%} & \textcolor{gray}{--} &  ~\\
        \midrule
        \OursRowColor & 59.67 & 67.70 & 1784.16 & 86.15 & 73.62 & 78.30 & 59.64 & 36.78 & 65.49 & 61.17 & \\
        \OursRowColor\multirow{-2}{*}{\makecell[c]{\ours\\(Ours)}} & 94.4\% & 100.0\% & 98.1\% & 100.3\% & 101.1\% & 97.9\% & 97.3\% & 101.0\% & 97.9\% & 99.3\% & \multirow{-2}{*}{{\textbf{98.7\%}}} \\

        \midrule
        
        \rowcolor{mygray}
        \multicolumn{12}{c}{\textit{Retain 128 Tokens} \ $\fg{\downarrow 77.8\%}$}\\
        \multirow{2}{*}{\makecell[c]{VisionZip\\\citeyearpar{yang2025visionzip}}} & 57.9 & 66.7 & 1743 & 85.2$\downarrow$ & 74.0 & 76.8 &58.7 & 36.1& 63.8& 59.74$\dagger$ & \multirow{2}{*}{97.0\%} \\
        ~ & 91.6\% & 98.5\% & 95.9\% & 99.2\% &101.6\% & 96.0\% & 95.8\% & 99.2\% &95.4\% & 97.0\% & ~ \\

        \midrule
        \multirow{2}{*}{\makecell[c]{DivPrune\\\citeyearpar{alvar2025divprune}}} & 58.89 & 66.07 & 1748.56 & 86.53 & 72.48 & 77.10 & 58.17 & 35.56 & 64.22 & 59.49 & \multirow{2}{*}{96.9\%} \\
        ~ & 93.2\% & 97.6\% & 96.2\% & 100.7\% & 99.6\% & 96.4\% & 94.9\% & 97.7\% & 96.0\% & 96.6\% & ~ \\
        \midrule
        \multirow{2}{*}{\makecell[c]{\textcolor{gray}{VisionZip\,\smash{\scalebox{0.8}{\emoji{fire}}}}\\\textcolor{gray}{\citeyearpar{yang2025visionzip}}}} & \textcolor{gray}{60.1} & \textcolor{gray}{67.6} & \textcolor{gray}{1736} & \textcolor{gray}{83.8} & \textcolor{gray}{73.0} & \textcolor{gray}{77.6} & \textcolor{gray}{59.2} & \textcolor{gray}{35.4} & \textcolor{gray}{64.9} & \textcolor{gray}{--} & \multirow{2}{*}{\textcolor{gray}{97.4\%}} \\
        ~ & \textcolor{gray}{95.1\%} & \textcolor{gray}{99.9\%} & \textcolor{gray}{95.5\%} & \textcolor{gray}{97.6\%} & \textcolor{gray}{100.3\%} & \textcolor{gray}{97.0\%} & \textcolor{gray}{96.6\%} & \textcolor{gray}{97.3\%} & \textcolor{gray}{97.0\%} & \textcolor{gray}{--} & ~\\
        \midrule
        \OursRowColor & 58.98 & 67.18 & 1756.20 & 86.22 & 73.38 & 77.57 & 59.22 & 35.44 & 64.26 & 60.11 & \\
        \OursRowColor\multirow{-2}{*}{\makecell[c]{\ours\\(Ours)}} & 93.3\% & 99.2\% & 96.6\% & 100.4\% & 100.8\% & 97.0\% & 96.6\% & 97.4\% & 96.1\% & 97.6\% & \multirow{-2}{*}{{\textbf{97.5\%}}} \\

        \midrule
        
        \rowcolor{mygray}
        \multicolumn{12}{c}{\textit{Retain 64 Tokens} \ $\fg{\downarrow 88.9\%}$}\\
        \multirow{2}{*}{\makecell[c]{VisionZip\\\citeyearpar{yang2025visionzip}}} & 56.2 & 64.9 & 1676 & 76.0 & 74.4 & 73.7 &57.4 & 36.4 & 60.4 & 57.13$\dagger$ & \multirow{2}{*}{93.7\%} \\
        ~ & 88.9\% & 95.9\% & 92.2\% & 88.5\% &102.2\% & 92.1\% & 93.6\% & 100\% & 90.3\% & 92.7\% & ~ \\

        \midrule
        \multirow{2}{*}{\makecell[c]{DivPrune\\\citeyearpar{alvar2025divprune}}} & 57.66 & 64.60 & 1777.93 & 84.80 & 72.09 & 75.20 & 57.11 & 35.22 & 62.44 & 57.70 & \multirow{2}{*}{95.3\%} \\
        ~ & 91.2\% & 95.4\% & 97.8\% & 98.7\% & 99.0\% & 94.0\% & 93.2\% & 96.8\% & 93.3\% & 93.7\% & ~ \\
        \midrule
        \multirow{2}{*}{\makecell[c]{\textcolor{gray}{VisionZip\,\smash{\scalebox{0.8}{\emoji{fire}}}}\\\textcolor{gray}{\citeyearpar{yang2025visionzip}}}} & \textcolor{gray}{58.1} & \textcolor{gray}{65.6} & \textcolor{gray}{1671} & \textcolor{gray}{81.6} & \textcolor{gray}{72.3} & \textcolor{gray}{75.2} & \textcolor{gray}{58.5} & \textcolor{gray}{35.3} & \textcolor{gray}{61.4} & \textcolor{gray}{--} & \multirow{2}{*}{\textcolor{gray}{94.8\%}} \\
        ~ & \textcolor{gray}{91.9\%} & \textcolor{gray}{96.9\%} & \textcolor{gray}{91.9\%} & \textcolor{gray}{95.0\%} & \textcolor{gray}{99.3\%} & \textcolor{gray}{94.0\%} & \textcolor{gray}{95.4\%} & \textcolor{gray}{97.0\%} & \textcolor{gray}{91.8\%} & \textcolor{gray}{--} & ~\\
        \midrule
        \OursRowColor & 58.42 & 65.72 & 1763.39 & 84.39 & 72.98 & 76.55 & 58.40 & 35.22 & 63.39 & 59.51 & \\
        \OursRowColor\multirow{-2}{*}{\makecell[c]{\ours\\(Ours)}} & 92.4\% & 97.1\% & 97.0\% & 98.2\% & 100.2\% & 95.7\% & 95.3\% & 96.8\% & 94.8\% & 96.6\% & \multirow{-2}{*}{{\textbf{96.4\%}}} \\

         \bottomrule
     \end{tabular}%
     }
    \caption{\textbf{Performance Comparison on  LLaVA-1.5-13B.} The vanilla number of visual tokens is $576$. The first line of each method shows the raw benchmark accuracy, and the second line is the proportion relative to the upper limit. SEED-I represents SEED-IMG, SEED represents SEED-ALL. Following ~\citep{yang2025visionzip}, Avg. is based on SEED-I instead of SEED. }
    \label{tab:llava15_13b}
\end{table*} 

% LLaVA-NeXT 7B Table
\begin{table*}[htbp]
    \centering
    \renewcommand{\arraystretch}{1.2}
    \resizebox{\textwidth}{!}{%
    \begin{tabular}{c | c c c c c c c c c c}
        \toprule
        \multirow{2}{*}{\textbf{Method}} & \textbf{GQA} & \textbf{MMB} & \textbf{MME} & \textbf{POPE} & \textbf{SQA} & \textbf{VQA}$^{\text{V2}}$ & \textbf{VQA}$^{\text{Text}}$ & \textbf{MMMU} & \textbf{SEED-I} & \makecell[c]{\textbf{Avg}.} \\
        ~ & \makecell[c]{Acc. $\uparrow$} & \makecell[c]{Acc. $\uparrow$} & \makecell[c]{P+C $\uparrow$} & \makecell[c]{F1 $\uparrow$} & \makecell[c]{Acc. $\uparrow$} & \makecell[c]{Acc. $\uparrow$} & \makecell[c]{Acc. $\uparrow$} & \makecell[c]{Acc. $\uparrow$} & \makecell[c]{Acc. $\uparrow$} & \makecell[c]{$\uparrow$} \\
        \midrule
        
        \rowcolor{mygray}
        Avg. Images $\bar{n}$ & \footnotesize $4.90$ & \footnotesize $4.12$ & \footnotesize $4.53$ & \footnotesize $4.90$ & \footnotesize $3.85$ & \footnotesize $4.98$ & \footnotesize $4.98$ & \footnotesize $4.07$ & \footnotesize $4.72$ &  \\
        \rowcolor{mygray} 
        Avg. Tokens ($\bar{n}*576$) & \footnotesize $2822.4$ & \footnotesize $2373.12$ & \footnotesize $2609.28$ & \footnotesize $2822.40$ & \footnotesize $2217.60$ & \footnotesize $2868.48$ & \footnotesize $2868.48$ & \footnotesize $2344.32$ & \footnotesize $2718.72$ &  \\
        \midrule
        \rowcolor{mygray}
        % \textit{Upper: $5 \times 576 = 640$} 
        \multicolumn{11}{c}{\textit{Upper Bound: 2880 ($5 \times 576$) Tokens} \ $\textbf{(100\%)}$} \\
        % \rowcolor{mygray}
        ~ & 64.2 & 67.9 & 1842 & 86.4 & 70.2 & 80.1 & 61.3 & 35.1 & 70.2 & \multirow{2}{*}{100\%} \\
        \multirow{-2}{*}{\makecell[c]{LLaVA-NeXT\\Vanilla 7B}} & 100\% & 100\% & 100\% & 100\% & 100\% & 100\% & 100\% & 100\% & 100\% & ~ \\
        \midrule

        \rowcolor{mygray}
        \textit{Upper: $5 \times 128 = 640$} & $627$ & $527$ & $580$ & $627$ & $493$ & $638$ & $638$ & $521$ & $604$ & $\fg{\downarrow \textbf{77.8\%}}$ \\

        \multirow{2}{*}{\makecell[c]{SparseVLM\\\citeyearpar{zhang2024sparsevlm}}} & 60.3 & 65.7 & 1772 & -- & 67.7 & 77.1 & 57.8 & 34.6 & -- & \multirow{2}{*}{-} \\
        ~ & 93.9\% & 96.8\% & 96.2\% & -- & 96.4\% & 96.3\% & 94.3\% & 98.6\% & -- & ~ \\
        
        \midrule
        \multirow{2}{*}{\makecell[c]{VisionZip\\\citeyearpar{yang2025visionzip}}} & 61.3 & 66.3 & 1787 & 86.3 & 68.1 & 79.1 & 60.2 & 34.7 & 66.7 & \multirow{2}{*}{97.5\%} \\
        ~ & 95.5\% & 97.6\% & 97.0\% & 99.9\% & 97.0\% & 98.8\% & 98.2\% & 98.9\% & 95.0\% & ~ \\

        \midrule
        \multirow{2}{*}{\makecell[c]{DivPrune\\\citeyearpar{alvar2025divprune}}} & 61.58 & 65.38 & 1773.04 & 85.51 & 67.82 & 78.94 & 55.41 & 36.89 & 67.56 & \multirow{2}{*}{97.1\%} \\
        ~ & 95.9\% & 96.3\% & 96.3\% & 99.0\% & 96.6\% & 98.6\% & 90.4\% & 105.1\% & 96.2\% & ~ \\
        \midrule
        
        \multirow{2}{*}{\makecell[c]{\textcolor{gray}{VisionZip\,\smash{\scalebox{0.6}{\emoji{fire}}}}\\\textcolor{gray}{\citeyearpar{yang2025visionzip}}}} & \textcolor{gray}{62.4} & \textcolor{gray}{65.9} & \textcolor{gray}{1778} & \textcolor{gray}{87.6} & \textcolor{gray}{67.9} & \textcolor{gray}{79.9} & \textcolor{gray}{60.8} & \textcolor{gray}{37.2} & \textcolor{gray}{67.8} & \multirow{2}{*}{\textcolor{gray}{98.9\%}} \\
        ~ & \textcolor{gray}{97.2\%} & \textcolor{gray}{97.1\%} & \textcolor{gray}{96.5\%} & \textcolor{gray}{101.4\%} & \textcolor{gray}{96.7\%} & \textcolor{gray}{99.8\%} & \textcolor{gray}{99.2\%} & \textcolor{gray}{106.0\%} & \textcolor{gray}{96.6\%} & ~ \\
        \midrule
        \OursRowColor & 62.27 & 65.29 & 1829.28 & 86.74 & 68.47 & 79.31 & 58.97 & 37.22 & 67.74 & \\
        \OursRowColor\multirow{-2}{*}{\makecell[c]{\ours\\(Ours)}} & 97.0\% & 96.2\% & 99.3\% & 100.4\% & 97.5\% & 99.0\% & 96.2\% & 106.0\% & 96.5\% & \multirow{-2}{*}{\textbf{98.7\%}} \\

        \midrule
        
        \rowcolor{mygray}
        \textit{Upper: $5 \times 64 = 320$} & $314$ & $264$ & $290$ & $314$ & $246$ & $319$ & $319$ & $261$ & $302$ & $\fg{\downarrow \textbf{88.9\%}}$ \\
        \multirow{2}{*}{\makecell[c]{SparseVLM\\\citeyearpar{zhang2024sparsevlm}}} & 57.7 & 64.3 & 1694 & -- & 67.3 & 73.4 & 55.9 & 34.4 & -- & \multirow{2}{*}{-} \\

        ~ & 89.9\% & 94.7\% & 92.0\% & -- & 95.9\% & 91.6\% & 91.2\% & 98.0\% & -- & ~ \\
        \midrule
        \multirow{2}{*}{\makecell[c]{VisionZip\\\citeyearpar{yang2025visionzip}}} & 59.3 & 63.1 & 1702 & 82.1 & 67.3 & 76.2 & 58.9 & 35.3 & 63.4 & \multirow{2}{*}{94.5\%} \\
        ~ & 92.4\% & 92.9\% & 92.4\% & 95.0\% & 95.9\% & 95.1\% & 96.1\% & 100.6\% & 90.3\% & ~ \\

        \midrule

        \multirow{2}{*}{\makecell[c]{DivPrune\\\citeyearpar{alvar2025divprune}}} & 59.63 & 63.66 & 1731.04 & 83.47 & 67.82 & 76.64 & 53.84 & 37.11 & 65.35 & \multirow{2}{*}{95.1\%} \\
        ~ & 92.9\% & 93.7\% & 94.0\% & 96.6\% & 96.6\% & 95.7\% & 87.8\% & 105.7\% & 93.1\% & ~ \\
        \midrule
        \multirow{2}{*}{\makecell[c]{\textcolor{gray}{VisionZip\,\smash{\scalebox{0.6}{\emoji{fire}}}}\\\textcolor{gray}{\citeyearpar{yang2025visionzip}}}} & \textcolor{gray}{61.0} & \textcolor{gray}{64.4} & \textcolor{gray}{1770} & \textcolor{gray}{86.2} & \textcolor{gray}{67.5} & \textcolor{gray}{78.4} & \textcolor{gray}{59.3} & \textcolor{gray}{38.0} & \textcolor{gray}{65.9} & \multirow{2}{*}{\textcolor{gray}{97.6\%}} \\
        ~ & \textcolor{gray}{95.0\%} & \textcolor{gray}{94.8\%} & \textcolor{gray}{96.1\%} & \textcolor{gray}{99.8\%} & \textcolor{gray}{96.2\%} & \textcolor{gray}{97.9\%} & \textcolor{gray}{96.7\%} & \textcolor{gray}{108.3\%} & \textcolor{gray}{93.9\%} & ~ \\
        \midrule
        \OursRowColor & 60.96 & 64.35 & 1799.33 & 85.76 & 67.33 & 77.68 & 56.93 & 38.00 & 66.29 & \\
        \OursRowColor\multirow{-2}{*}{\makecell[c]{\ours\\(Ours)}} & 95.0\% & 94.8\% & 97.7\% & 99.3\% & 95.9\% & 97.0\% & 92.9\% & 108.3\% & 94.4\% & \multirow{-2}{*}{\textbf{97.3\%}} \\

        \midrule
        \rowcolor{mygray}
        \textit{Upper: $5 \times 32 = 160$} & $157$ & $132$ & $145$ & $157$ & $123$ & $159$ & $159$ & $130$ & $151$ & $\fg{\downarrow \textbf{94.4\%}}$ \\
        \multirow{2}{*}{\makecell[c]{SparseVLM\\\citeyearpar{zhang2024sparsevlm}}} & 51.2 & 63.1 & 1542 & -- & 67.5 & 66.3 & 46.4 & 32.8 & -- & \multirow{2}{*}{-} \\
        ~ & 79.8\% & 92.9\% & 83.7\% & -- & 96.2\% & 82.8\% & 75.7\% & 93.4\% & -- & ~ \\
        \midrule
        \multirow{2}{*}{\makecell[c]{VisionZip\\\citeyearpar{yang2025visionzip}}} & 55.5 & 60.1 & 1630 & 74.8 & 68.3 & 71.4 & 56.2 & 36.1 & 58.3 & \multirow{2}{*}{90.4\%} \\
        ~ & 86.4\% & 88.5\% & 88.5\% & 86.6\% & 97.3\% & 89.1\% & 91.7\% & 102.8\% & 83.0\% & ~ \\

        \midrule
        
        \multirow{2}{*}{\makecell[c]{DivPrune\\\citeyearpar{alvar2025divprune}}} & 57.79 & 62.29 & 1658.25 & 79.36 & 68.02 & 73.92 & 52.42 & 36.44 & 62.54 & \multirow{2}{*}{92.4\%} \\
        ~ & 90.0\% & 91.7\% & 90.0\% & 91.9\% & 96.9\% & 92.3\% & 85.5\% & 103.8\% & 89.1\% & ~ \\
        \midrule
        \multirow{2}{*}{\makecell[c]{\textcolor{gray}{VisionZip\,\smash{\scalebox{0.6}{\emoji{fire}}}}\\\textcolor{gray}{\citeyearpar{yang2025visionzip}}}} & \textcolor{gray}{58.2} & \textcolor{gray}{63.9} & \textcolor{gray}{1699} & \textcolor{gray}{83.4} & \textcolor{gray}{67.5} & \textcolor{gray}{75.6} & \textcolor{gray}{57.3} & \textcolor{gray}{37.7} & \textcolor{gray}{62.9} & \multirow{2}{*}{\textcolor{gray}{95.0\%}} \\
        ~ & \textcolor{gray}{90.7\%} & \textcolor{gray}{94.1\%} & \textcolor{gray}{92.2\%} & \textcolor{gray}{96.5\%} & \textcolor{gray}{96.2\%} & \textcolor{gray}{94.4\%} & \textcolor{gray}{93.5\%} & \textcolor{gray}{107.4\%} & \textcolor{gray}{89.6\%} & ~ \\
        \midrule
        \OursRowColor & 60.05 & 62.97 & 1715.54 & 83.87 & 67.97 & 75.62& 54.17 & 37.89 & 64.54 & \\
        \OursRowColor\multirow{-2}{*}{\makecell[c]{\ours\\(Ours)}} & 93.5\% & 92.7\% & 93.1\% & 97.1\% & 96.8\% & 94.4\% & 88.4\% & 107.9\% & 91.9\% & \multirow{-2}{*}{\textbf{95.1\%}} \\

        \bottomrule
    \end{tabular}%
    }
    \caption{\textbf{Performance Comparison on LLaVA-NeXT-7B.} The vanilla number of visual tokens varies by dataset due to dynamic image processing (max 2880 for 5 images). `-' means performance not available in the original paper.}
    \label{tab:llavanext_7b}
\end{table*} 

\begin{table*}[htbp]
    \centering
    \renewcommand{\arraystretch}{1.2}
    \resizebox{\textwidth}{!}{%
    \begin{tabular}{c | c c c c c c c c c c}
        \toprule
        \multirow{2}{*}{\textbf{Method}} & \textbf{GQA} & \textbf{MMB} & \textbf{MME} & \textbf{POPE} & \textbf{SQA} & \textbf{VQA}$^{\text{V2}}$ & \textbf{VQA}$^{\text{Text}}$ & \textbf{MMMU}& \textbf{SEED-I} &\makecell[c]{\textbf{Avg}.}\\
        ~ & \makecell[c]{Acc. $\uparrow$} & \makecell[c]{Acc. $\uparrow$} & \makecell[c]{P+C $\uparrow$} & \makecell[c]{F1 $\uparrow$} & \makecell[c]{Acc. $\uparrow$} & \makecell[c]{Acc. $\uparrow$} & \makecell[c]{Acc. $\uparrow$} & \makecell[c]{Acc. $\uparrow$} & \makecell[c]{Acc. $\uparrow$} & \makecell[c]{$\uparrow$}\\
        \midrule
        \rowcolor{mygray}
        Avg. Images $\bar{n}$ & \footnotesize $4.90$ & \footnotesize $4.12$ & \footnotesize $4.53$ & \footnotesize $4.90$ & \footnotesize $3.85$ & \footnotesize $4.98$ & \footnotesize $4.98$ & \footnotesize $4.07$ & \footnotesize $4.72$ &  \\
        \rowcolor{mygray} 
        Avg. Tokens ($\bar{n}*576$) & \footnotesize $2822.4$ & \footnotesize $2373.12$ & \footnotesize $2609.28$ & \footnotesize $2822.40$ & \footnotesize $2217.60$ & \footnotesize $2868.48$ & \footnotesize $2868.48$ & \footnotesize $2344.32$ & \footnotesize $2718.72$ &  \\
        \midrule
        \rowcolor{mygray}
        \multicolumn{11}{c}{\textit{Upper Bound: 2880 ($5 \times 576$) Tokens} \ $\textbf{(100\%)}$}\\
        \multirow{2}{*}{\makecell[c]{LLaVA-NeXT\\Vanilla 13B}} &65.4& 70.0 & 1901 &  86.2  & 73.5 & 81.8 & 64.3 & 36.2 & 71.9 & \multirow{2}{*}{100\%} \\
        ~ & 100\% & 100\% & 100\% & 100\% & 100\% & 100\% & 100\% & 100\% & 100\%   & ~ \\
        
        \midrule

        \rowcolor{mygray}
        \textit{Upper: $5 \times 128 = 640$} & $627$ & $527$ & $580$ & $627$ & $493$ & $638$ & $638$ & $521$ & $604$ & $\fg{\downarrow \textbf{77.8\%}}$ \\

        \multirow{2}{*}{\makecell[c]{VisionZip\\\citeyearpar{yang2025visionzip}}} & 63.0 & 68.6 & 1871 &85.7 & 71.2& 79.7 &62.2 & 36.4  &68.8 & \multirow{2}{*}{97.7\%} \\
        ~ & 96.3\% & 98.0\% & 98.4\% &99.4\% &96.9\% & 97.4\% & 96.7\% & 100.5\% & 95.7\%& ~\\

        \midrule
        \multirow{2}{*}{\makecell[c]{DivPrune\\\citeyearpar{alvar2025divprune}}} & 62.82 & 66.84 & 1832.76 & 86.17 & 71.84 & 79.87 & 57.54 & 37.78 & 69.38 & \multirow{2}{*}{97.1\%} \\
        ~ & 96.1\% & 95.5\% & 96.4\% & 99.9\% & 97.7\% & 97.6\% & 89.5\% & 104.4\% & 96.5\% & ~ \\
        
        \midrule
        \multirow{2}{*}{\makecell[c]{\textcolor{gray}{VisionZip\,\smash{\scalebox{0.6}{\emoji{fire}}}}\\\textcolor{gray}{\citeyearpar{yang2025visionzip}}}} &  \textcolor{gray}{63.7}& \textcolor{gray}{66.6} & \textcolor{gray}{1829}& \textcolor{gray}{86.3}  &\textcolor{gray}{73.2}& \textcolor{gray}{81.2} &\textcolor{gray}{64.4} & \textcolor{gray}{38.1} & \textcolor{gray}{69.2} & \multirow{2}{*}{\textcolor{gray}{98.8\%}} \\
        ~ & \textcolor{gray}{97.4\%} & \textcolor{gray}{95.1\%} & \textcolor{gray}{96.2\%} &\textcolor{gray}{100.1\%} &\textcolor{gray}{99.6\%} & \textcolor{gray}{99.3\%} & \textcolor{gray}{100.2\%} & \textcolor{gray}{105.2\%} & \textcolor{gray}{96.2\%}& ~\\
        \midrule
        \OursRowColor & 63.71 & 67.44 & 1874.63 & 86.72 & 72.29 & 80.55 & 61.06 & 37.11 & 69.61 & \\
        \OursRowColor\multirow{-2}{*}{\makecell[c]{\ours\\(Ours)}} & 97.4\% & 96.3\% & 98.6\% & 100.6\% & 98.4\% & 98.5\% & 95.0\% & 102.5\% & 96.8\% & \multirow{-2}{*}{\textbf{98.2\%}} \\

        \midrule
        
        \rowcolor{mygray}
        \textit{Upper: $5 \times 64 = 320$} & $314$ & $264$ & $290$ & $314$ & $246$ & $319$ & $319$ & $261$ & $302$ & $\fg{\downarrow \textbf{88.9\%}}$ \\
        \multirow{2}{*}{\makecell[c]{VisionZip\\\citeyearpar{yang2025visionzip}}} & 60.7 & 67.2 &1805 & 82.0 &70.3& 76.8 &60.9 & 35.6& 65.2& \multirow{2}{*}{94.7\%} \\
        ~ & 92.8\% & 96.0\% & 95.0\% &95.1\% & 95.6\% & 93.9\% & 94.7\% & 98.3\% & 90.7\% & ~\\

        \midrule
        \multirow{2}{*}{\makecell[c]{DivPrune\\\citeyearpar{alvar2025divprune}}} & 61.03 & 65.46 & 1802.79 & 84.86 & 71.39 & 77.6 & 55.75 & 36.00 & 66.75 & \multirow{2}{*}{94.5\%} \\
        ~ & 93.3\% & 93.5\% & 94.8\% & 98.4\% & 97.1\% & 94.9\% & 86.7\% & 99.4\% & 92.8\% & ~ \\
        \midrule
        \multirow{2}{*}{\makecell[c]{\textcolor{gray}{VisionZip\,\smash{\scalebox{0.6}{\emoji{fire}}}}\\\textcolor{gray}{\citeyearpar{yang2025visionzip}}}} & \textcolor{gray}{62.5} &\textcolor{gray}{66.9} & \textcolor{gray}{1861}  &\textcolor{gray}{85.7}& \textcolor{gray}{72.7} &\textcolor{gray}{80.0} & \textcolor{gray}{63.2} & \textcolor{gray}{36.9} & \textcolor{gray}{67.9}& \multirow{2}{*}{\textcolor{gray}{97.8\%}} \\
        ~ & \textcolor{gray}{95.6\%} & \textcolor{gray}{95.6\%} & \textcolor{gray}{97.9\%} & \textcolor{gray}{99.4\%} & \textcolor{gray}{98.9\%} & \textcolor{gray}{97.8\%} & \textcolor{gray}{98.3\%} & \textcolor{gray}{101.9\%} & \textcolor{gray}{94.4\%} & ~ \\
        \midrule
        \OursRowColor & 62.95 & 65.55 & 1840.10 & 85.88 & 72.38 & 78.79 & 58.88 & 36.33 & 67.81 & \\
        \OursRowColor\multirow{-2}{*}{\makecell[c]{\ours\\(Ours)}} & 96.3\% & 93.6\% & 96.8\% & 99.6\% & 98.5\% & 96.3\% & 91.6\% & 100.4\% & 94.3\% & \multirow{-2}{*}{\textbf{96.4\%}} \\

        \midrule
        
        \rowcolor{mygray}
        \textit{Upper: $5 \times 32 = 160$} & $157$ & $132$ & $145$ & $157$ & $123$ & $159$ & $159$ & $130$ & $151$ & $\fg{\downarrow \textbf{94.4\%}}$ \\
        \multirow{2}{*}{\makecell[c]{VisionZip\\\citeyearpar{yang2025visionzip}}} & 57.8 & 64.9 & 1739 & 76.6 &69.3& 72.4 &58.4 & 37.0 & 61.1& \multirow{2}{*}{91.4\%} \\
        ~ & 88.4\% & 92.7\% & 91.5\% & 88.9\% & 94.3\% & 88.5\% & 90.8\% & 102.2\% & 85.0\% & ~ \\

        \midrule
        \multirow{2}{*}{\makecell[c]{DivPrune\\\citeyearpar{alvar2025divprune}}} & 59.34 & 64.78 & 1699.83 & 82.16 & 70.55 & 74.72 & 54.65 & 35.89 & 63.80 & \multirow{2}{*}{92.0\%} \\
        ~ & 90.7\% & 92.5\% & 89.4\% & 95.3\% & 96.0\% & 91.3\% & 85.0\% & 99.1\% & 88.7\% & ~ \\
        \midrule
        \multirow{2}{*}{\makecell[c]{\textcolor{gray}{VisionZip\,\smash{\scalebox{0.6}{\emoji{fire}}}}\\\textcolor{gray}{\citeyearpar{yang2025visionzip}}}} & \textcolor{gray}{59.7} &\textcolor{gray}{65.3} & \textcolor{gray}{1766} & \textcolor{gray}{84.0} &\textcolor{gray}{72.0}& \textcolor{gray}{77.6} &\textcolor{gray}{60.8} & \textcolor{gray}{36.0} & \textcolor{gray}{64.4} & \multirow{2}{*}{\textcolor{gray}{94.6\%}} \\
        ~ & \textcolor{gray}{91.3\%} & \textcolor{gray}{93.3\%} & \textcolor{gray}{92.9\%} & \textcolor{gray}{97.4\%} & \textcolor{gray}{98.0\%} & \textcolor{gray}{94.9\%} & \textcolor{gray}{94.6\%} & \textcolor{gray}{99.4\%} & \textcolor{gray}{89.6\%} & ~ \\
        \midrule
        \OursRowColor & 61.94 & 65.89 & 1811.35 & 85.11 & 72.43 & 76.8 & 55.91 & 37.11 & 65.45 & \\
        \OursRowColor\multirow{-2}{*}{\makecell[c]{\ours\\(Ours)}} & 94.7\% & 94.1\% & 95.3\% & 98.7\% & 98.5\% & 93.9\% & 87.0\% & 102.5\% & 91.0\% & \multirow{-2}{*}{\textbf{95.1\%}} \\

         \bottomrule
     \end{tabular}%
     }
    \caption{\textbf{Performance Comparison on LLaVA-NeXT-13B.} The vanilla upper number of visual tokens is $2880$. SEED-I represents SEED-IMG.}
    \label{tab:llavanext_13b}
\end{table*} 

\begin{table*}[htbp]
    \centering
    \renewcommand{\arraystretch}{1}
    \resizebox{\textwidth}{!}{%
    \begin{tabular}{c | c c c c c |c c| c}
        \toprule
        \multirow{2}{*}{\makecell[c]{\textbf{Method}}} & \textbf{GQA} & \textbf{MMB} & \textbf{MME} & \textbf{POPE} & \textbf{VQA}$^{\text{Text}}$ & \textbf{SQA}& \textbf{OCRBench} & \makecell[c]{\textbf{Avg.}$\dagger$}\\
        ~ & \makecell[c]{Acc. $\uparrow$} & \makecell[c]{Acc. $\uparrow$} & \makecell[c]{P+C $\uparrow$} & \makecell[c]{F1 $\uparrow$} & \makecell[c]{Acc. $\uparrow$} & \makecell[c]{Acc. $\uparrow$} & \makecell[c]{Acc. $\uparrow$} & \makecell[c]{$\uparrow$}\\
        \midrule

        \rowcolor{mygray}
        \multicolumn{9}{c}{\textit{Dynamic Resolution (MinPix = 256 $\times$ 28 $\times$ 28, MaxPix = 2048 $\times$ 28 $\times$ 28),  Upper Bound} \ $\textbf{(100\%)}$}\\
        \rowcolor{mygray}
        Avg. Tokens $\bar{T}$ & \footnotesize $358.5$ & \footnotesize $276.9$ & \footnotesize $867.6$ & \footnotesize $359.6$ & \footnotesize $976.5$ & \footnotesize $323.0$ & \footnotesize $652.8$ &  \\
        \midrule
        \multirow{2}{*}{\makecell[c]{Qwen-2.5-VL-7B\\Dynamic Res.}} & 60.48 & 83.25 & 2327 & 86.16 & 77.72 & 87.46 & 83.80 & \multirow{2}{*}{100\%} \\
        ~ & 100\% & 100\% & 100\% & 100\% & 100\% & 100\% & 100\% & ~ \\
        \midrule
        \rowcolor{mygray}
        \multicolumn{9}{c}{\textit{Fixed Resolution (MinPix = MaxPix = 2048 $\times$ 28 $\times$ 28), Upper Bound} \ $\textbf{(100\%)}$}\\
        % \midrule
        \multirow{2}{*}{\makecell[c]{Qwen-2.5-VL-7B\\Fixed Res.}} & 58.59 & 83.59 & 2339 & 86.09 & 76.64 & 86.91 & 76.60 & \multirow{2}{*}{99.3\%}\\
        ~ & 96.9\% & 100.4\% & 100.5\% & 99.9\% & 98.6\% & 99.4\% & 91.4\% & ~ \\
        \midrule
        \midrule
        \rowcolor{mygray}
        \textit{Retain 20\% $\bar{T}$} & $71.7$ & $55.4$ & $173.5$ & $71.9$ & $195.3$ & $64.6$ & $130.6$ & $\fg{\downarrow \textbf{80\%}}$ \\
        \midrule
        \multirow{2}{*}{\makecell[c]{VisionZip\\\citeyearpar{yang2025visionzip}}} & 56.80 & 80.33 & 2174 & 83.38 & 70.43 & 84.23 & 59.50 & \\
        ~ & 93.9\% & 96.5\% & 93.4\% & 96.8\% & 90.6\% & 96.3\% & 71.0\% & \multirow{-2}{*}{94.2\%} \\

        \midrule
        \multirow{2}{*}{\makecell[c]{DivPrune \\\citeyearpar{alvar2025divprune}}} & 56.70 & 76.98 & 2163 & 80.59 & 65.86 & 80.91 & 48.10 & \\
        ~ & 93.8\% & 92.5\% & 93.0\% & 93.5\% & 84.7\% & 92.5\% & 57.4\% & \multirow{-2}{*}{91.5\%} \\
        \midrule
        \OursRowColor
        ~& 58.09 & 79.30 & 2217 & 82.38 & 70.49 & 81.61 & 59.60 & \\
       \OursRowColor
        \multirow{-2}{*}{\makecell[c]{MMTok\\(Ours)}} & 96.0\% & 95.3\% & 95.3\% & 95.7\% & 90.7\% & 93.3\% & 71.1\% & \multirow{-2}{*}{\textbf{94.6\%}}\\
        \midrule
        \midrule
        \rowcolor{mygray}
        \textit{Retain 10\% $\bar{T}$} & $35.9$ & $27.7$ & $86.8$ & $36.0$ & $97.7$ & $32.3$ & $65.3$ & $\fg{\downarrow \textbf{90\%}}$ \\
        \midrule
        \multirow{2}{*}{\makecell[c]{VisionZip\\\citeyearpar{yang2025visionzip}}} & 52.47 & 75.60 & 2003 & 78.90 & 63.78 & 82.30 & 36.90 & \\
        ~ & 86.8\% & 90.8\% & 86.1\% & 91.6\% & 82.1\% & 94.1\% & 44.0\% & \multirow{-2}{*}{87.5\%} \\
        \midrule
        \multirow{2}{*}{\makecell[c]{DivPrune \\\citeyearpar{alvar2025divprune}}} & 53.43 & 72.85 & 1957 & 74.99 & 59.59 & 79.57 & 37.30 & \\
        ~ & 88.3\% & 87.5\% & 84.1\% & 87.0\% & 76.7\% & 91.0\% & 44.5\% & \multirow{-2}{*}{84.7\%} \\
        \midrule
        \OursRowColor
        ~& 55.09 & 74.74 & 2051 & 78.75 & 63.90 & 80.47 & 43.60 &  \\
     \OursRowColor
        \multirow{-2}{*}{\makecell[c]{MMTok\\(Ours)}} & 91.1\% & 89.8\% & 88.1\% & 91.4\% & 82.2\% & 92.0\% & 52.1\% & \multirow{-2}{*}{\textbf{88.5\%}} \\
        \midrule
        \midrule
        \rowcolor{mygray}
        \textit{Retain 5\% $\bar{T}$} & $17.9$ & $13.8$ & $43.4$ & $18.0$ & $48.8$ & $16.2$ & $32.6$ & $\fg{\downarrow \textbf{95\%}}$ \\
        \midrule
        \multirow{2}{*}{\makecell[c]{VisionZip\\\citeyearpar{yang2025visionzip}}} & 46.28 & 67.53 & 1677 & 66.38 & 54.49 & 79.57 & 19.70 & \\
        ~ & 76.5\% & 81.1\% & 72.1\% & 77.1\% & 70.1\% & 91.0\% & 23.5\% & \multirow{-2}{*}{75.4\%} \\
        \midrule
        \multirow{2}{*}{\makecell[c]{DivPrune \\\citeyearpar{alvar2025divprune}}} & 49.01 & 65.89 & 1739 & 68.45 & 52.02 & 77.05 & 24.90 & \\
        ~ & 81.0\% & 79.1\% & 74.7\% & 79.4\% & 66.9\% & 88.1\% & 29.7\% & \multirow{-2}{*}{76.3\%} \\
        \midrule
        \OursRowColor
        ~& 50.66 & 65.89 & 1796 & 71.35 & 55.95 & 77.19 & 30.70 & \\
        \OursRowColor
        \multirow{-2}{*}{\makecell[c]{MMTok\\(Ours)}} & 83.8\% & 79.2\% & 77.2\% & 82.8\% & 72.0\% & 88.2\% & 36.6\% & \multirow{-2}{*}{{\textbf{79.0\%}}}\\
        \midrule
        \rowcolor{mygray}
        \multicolumn{9}{c}{\textit{0 Token} \ $\fg{\downarrow \textbf{100\%}}$}\\
        
        \multirow{2}{*}{\makecell[c]{Qwen-2.5-VL 7B\\Text-Only}} & 31.84 & 20.10 & 935 & 0.00$^*$ & 38.93 & 71.10 & 1.80 & \\
        ~ & 54.3\% & 24.0\% & 40.0\% & 0.0\%* & 50.8\% & 88.6\% & 2.1\% & \multirow{-2}{*}{33.8\%} \\
        \bottomrule
     \end{tabular}%
     }
    \caption{\textbf{Performance Comparison on Qwen-2.5-VL-7B-Instruct.} Avg.$\dagger$~ is the average performance over the 5 datasets: GQA, MMB, MME, POPE, and VQA$^{\text{Text}}$. The first line of each method shows the raw benchmark accuracy, and the second line is the proportion relative to the upper limit. *Qwen-2.5-VL outputs \texttt{"No"} for all POPE questions when no visual tokens are provided, resulting in 0\% F1 score. }
    \label{tab:qwen}
\end{table*}

\begin{table*}[htbp]
    \centering
    \renewcommand{\arraystretch}{1.2}
    \resizebox{\textwidth}{!}{%
    \begin{tabular}{c | c c c c c c c c | c c c c}
        \toprule
        \multirow{2}{*}{\makecell[c]{\textbf{Method}}} & \textbf{GQA} & \textbf{MMB} & \textbf{MME} & \textbf{POPE} & \textbf{SQA} & \textbf{VQA}$^{\text{Text}}$ & \textbf{MMMU} & \textbf{SEED-I*} & \makecell[c]{\textbf{Avg@8}} & \makecell[c]{\textbf{Avg@5}} & \makecell[c]{\textbf{$\geq$90\%}} & \makecell[c]{\textbf{$\geq$80\%}}\\
        ~ & \makecell[c]{Acc. $\uparrow$} & \makecell[c]{Acc. $\uparrow$} & \makecell[c]{P+C $\uparrow$} & \makecell[c]{F1 $\uparrow$} & \makecell[c]{Acc. $\uparrow$} & \makecell[c]{Acc. $\uparrow$} & \makecell[c]{Acc. $\uparrow$} & \makecell[c]{Acc. $\uparrow$} & \makecell[c]{$\uparrow$} & \makecell[c]{$\uparrow$} & \makecell[c]{/8 $\uparrow$} & \makecell[c]{/8 $\uparrow$}\\
        \midrule
        \rowcolor{mygray}
        \multicolumn{13}{c}{\textit{Vanilla Baseline (576 tokens)}}\\
        \multirow{1}{*}{\makecell[c]{LLaVA-1.5-7B}} & 61.9 & 64.7 & 1862 & 85.9 & 69.5 & 58.2 & 36.3 & 66.14 & 100.0\% & 100.0\% & 8/8 & 8/8 \\
        \multirow{1}{*}{\makecell[c]{\textcolor{gray}{90\% Threshold}}} & \textcolor{gray}{55.71} & \textcolor{gray}{58.23} & \textcolor{gray}{1675.80} & \textcolor{gray}{77.31} & \textcolor{gray}{62.55} & \textcolor{gray}{52.38} & \textcolor{gray}{32.67} & \textcolor{gray}{59.53} & \textcolor{gray}{90.0\%} & \textcolor{gray}{90.0\%} & \textcolor{gray}{--} & \textcolor{gray}{--} \\
        \multirow{1}{*}{\makecell[c]{\textcolor{gray}{80\% Threshold}}} & \textcolor{gray}{49.52} & \textcolor{gray}{51.76} & \textcolor{gray}{1489.60} & \textcolor{gray}{68.72} & \textcolor{gray}{55.60} & \textcolor{gray}{46.56} & \textcolor{gray}{29.04} & \textcolor{gray}{52.91} & \textcolor{gray}{80.0\%} & \textcolor{gray}{80.0\%} & \textcolor{gray}{--} & \textcolor{gray}{--} \\
        \midrule
        
        \rowcolor{mygray}
        \multicolumn{13}{c}{\textit{64 Tokens}}\\
        \multirow{1}{*}{\makecell[c]{VisionZip}} & 55.1 & {60.1} & {1690} & 77.0 & {69.0} & 55.5 & 36.2 & 57.84 & 93.0\% & 90.0\% & 5/8 & 8/8 \\
        \multirow{1}{*}{\makecell[c]{DivPrune}} & {57.78} & 59.28 & 1674.40 & {85.56} & 68.07 & 54.69 & 35.56 & {60.21} & {94.4\%} & {93.1\%} & 7/8 & {8/8} \\
        \OursRowColor\multirow{1}{*}{\ours} & {58.29} & {61.17} & {1715.33} & {85.77} &  
        69.16& {56.01} & {36.11} & {61.29} & {96.1\%} & {94.7\%} & {8/8} & {8/8} \\
        \midrule
        
        \rowcolor{mygray}
        \multicolumn{13}{c}{\textit{32 Tokens}}\\
        \multirow{1}{*}{\makecell[c]{VisionZip}} & 51.78 & 57.22 & 1580.43 & 68.88 & {68.77} & 53.23 & 35.11 & 53.28 & 88.1\% & 83.5\% & 3/8 & 8/8 \\
        \multirow{1}{*}{\makecell[c]{DivPrune}} & {55.11} & {58.93} & 1600 & {82.06} & {68.62} & 53.20 & {35.33} & {57.08} & {91.9\%} & {89.6\%} & 5/8 & {8/8} \\
        \OursRowColor\multirow{1}{*}{\ours} & {55.95} & {58.59} & {1624.72} & {82.95} &  
        68.86 & {53.70} & {35.33} & {59.81} & {93.0\%} & {91.0\%} & {7/8} & {8/8} \\
        \midrule
        
        \rowcolor{mygray}
        \multicolumn{13}{c}{\textit{16 Tokens}}\\
        \multirow{1}{*}{\makecell[c]{VisionZip}} & 46.72 & 45.70 & 1326.89 & 51.84 & {67.67} & 49.74 & {35.00} & 46.66 & 78.4\% & 69.7\% & 2/8 & 3/8 \\
        \multirow{1}{*}{\makecell[c]{DivPrune}} & {51.10} & {53.09} & {1518} & {69.56} & {69.41} & {50.01} & {35.44} & {52.72} & {86.3\%} & {81.4\%} & 2/8 & 7/8 \\
        \OursRowColor\multirow{1}{*}{\ours} & {53.31} & {54.30} & {1550.65} & {79.79} &  
        68.82& {50.04} & 34.22 & {56.67} & {88.9\%} & {86.4\%} & {3/8} & {8/8} \\
        \midrule
        
        \rowcolor{mygray}
        \multicolumn{13}{c}{\textit{8 Tokens}}\\
        \multirow{1}{*}{\makecell[c]{VisionZip}} & 39.47 & 24.40 & 1069.94 & 23.66 & 64.30 & 44.62 & 33.67 & 38.46 & 63.3\% & 48.9\% & {2/8} & {2/8} \\
        \multirow{1}{*}{\makecell[c]{DivPrune}} & {46.09} & {43.13} & {1294} & {52.10} & {67.92} & {45.21} & {34.00} & {46.68} & {76.4\%} & {68.4\%} & {2/8} & {2/8} \\
        \OursRowColor\multirow{1}{*}{\ours} & {49.06} & {49.06} & {1355.31} & {78.46} &  
        66.83& {45.71} & {34.11} & {52.74} & {83.5\%} & {79.8\%} & {3/8} & {3/8} \\
        \midrule
        
        \rowcolor{mygray}
        \multicolumn{13}{c}{\textit{4 Tokens}}\\
        \multirow{1}{*}{\makecell[c]{VisionZip}} & 36.57 & 18.30 & 923.57 & 24.48 & {63.56} & 40.82 & {33.78} & 35.34 & 59.2\% & 43.8\% & {2/8} & {2/8} \\
        \multirow{1}{*}{\makecell[c]{DivPrune}} & {40.67} & {28.61} & {1134} & {33.33} & {65.20} & {42.54} & 33.33 & {40.99} & {66.3\%} & {54.3\%} & {2/8} & {2/8} \\
        \OursRowColor\multirow{1}{*}{\ours} & {43.93} & {36.94} & {1290.31} & {74.84} &  
        65.64& {43.52} & {34.00} & {48.10} & {77.5\%} & {71.4\%} & {2/8} & {3/8} \\
        \midrule
        
        \rowcolor{mygray}
        \multicolumn{13}{c}{\textit{2 Tokens}}\\
        \multirow{1}{*}{\makecell[c]{VisionZip}} & 35.94 & 16.84 & 890.28 & 26.48 & 63.31 & 39.55 & {33.78} & 34.62 & 58.4\% & 43.0\% & {2/8} & {2/8} \\
        \multirow{1}{*}{\makecell[c]{DivPrune}} & {38.58} & {21.48} & {991} & {37.60} & {64.60} & {42.16} & {33.44} & {38.43} & {63.5\%} & {50.1\%} & {2/8} & {2/8} \\
        \OursRowColor\multirow{1}{*}{\ours} & {40.58} & {25.69} & {1122.42} & {68.95} &  
        64.90 & {42.42} & 32.67 & {42.89} & {70.9\%} & {62.1\%} & {2/8} & {3/8} \\
        \midrule
        
        \rowcolor{mygray}
        \multicolumn{13}{c}{\textit{0 Tokens}}\\
        \multirow{1}{*}{\makecell[c]{Baseline}} & 37.65 & 19.33 & 970.89 & 44.64 & 
        63.51 & 41.66 & 33.33 & 37.03 & 63.2\% & 50.2\% & 2/8 & 2/8 \\
        \bottomrule
    \end{tabular}%
    }
    \caption{\textbf{Extended Performance Comparison with  Extremely Less Token Budgets on LLaVA-1.5-7B.} *SEED-I indicts SEEDBench-Image. Avg@8 is across all 8 datasets, while Avg@5 is on 5 High-IC datasets. }
    \label{tab:token_scaling}
\end{table*} 
\begin{table*}[htbp]
    \centering
    \renewcommand{\arraystretch}{1.2}
    \resizebox{\textwidth}{!}{%
    \begin{tabular}{c | c c c c c c c c | c c c c}
        \toprule
        \multirow{2}{*}{\makecell[c]{\textbf{Method}}} & \textbf{GQA} & \textbf{MMB} & \textbf{MME} & \textbf{POPE} & \textbf{SQA} & \textbf{VQA}$^{\text{Text}}$ & \textbf{MMMU} & \textbf{SEED-I} & \makecell[c]{\textbf{Avg@8}} & \makecell[c]{\textbf{Avg@6}} & \makecell[c]{\textbf{$\geq$90\%}} & \makecell[c]{\textbf{$\geq$80\%}}\\
        ~ & \makecell[c]{Acc. $\uparrow$} & \makecell[c]{Acc. $\uparrow$} & \makecell[c]{P+C $\uparrow$} & \makecell[c]{F1 $\uparrow$} & \makecell[c]{Acc. $\uparrow$} & \makecell[c]{Acc. $\uparrow$} & \makecell[c]{Acc. $\uparrow$} & \makecell[c]{Acc. $\uparrow$} & \makecell[c]{$\uparrow$} & \makecell[c]{$\uparrow$} & \makecell[c]{/8 $\uparrow$} & \makecell[c]{/8 $\uparrow$}\\
        \midrule
        \rowcolor{mygray}
        \multicolumn{13}{c}{\textit{Vanilla Baseline (Upper 2880 tokens)}}\\
        \multirow{1}{*}{\makecell[c]{LLaVA-NeXT-7B}} & 64.2 & 67.9 & 1842 & 86.4 & 70.2 & 61.3 & 35.1 & 70.2 & 100.0\% & 100.0\% & 8/8 & 8/8 \\
        \multirow{1}{*}{\makecell[c]{\textcolor{gray}{90\% Threshold}}} & \textcolor{gray}{57.78} & \textcolor{gray}{61.11} & \textcolor{gray}{1657.80} & \textcolor{gray}{77.76} & \textcolor{gray}{63.18} & \textcolor{gray}{55.17} & \textcolor{gray}{31.59} & \textcolor{gray}{63.18} & \textcolor{gray}{90.0\%} & \textcolor{gray}{90.0\%} & \textcolor{gray}{--} & \textcolor{gray}{--} \\
        \multirow{1}{*}{\makecell[c]{\textcolor{gray}{80\% Threshold}}} & \textcolor{gray}{51.36} & \textcolor{gray}{54.32} & \textcolor{gray}{1473.60} & \textcolor{gray}{69.12} & \textcolor{gray}{56.16} & \textcolor{gray}{49.04} & \textcolor{gray}{28.08} & \textcolor{gray}{56.16} & \textcolor{gray}{80.0\%} & \textcolor{gray}{80.0\%} & \textcolor{gray}{--} & \textcolor{gray}{--} \\
        \midrule
        
        \rowcolor{mygray}
        \multicolumn{13}{c}{\textit{Upper 32×5 Tokens (160 Tokens)} \ ${5.6\%}$}\\
        \multirow{1}{*}{\makecell[c]{VisionZip}} & 55.5 & 60.1 & 1630 & 74.8 & {68.3} & {56.2} & 36.1 & 58.3 & 90.6\% & 87.5\% & 3/8 & 8/8 \\
       
        \multirow{1}{*}{\makecell[c]{DivPrune}} & {57.79} & {62.29} & {1658} & {79.36} & {68.02} & 52.42 & {36.44} & {62.54} & 92.4\% & 89.7\% & 6/8 & 8/8 \\
        \OursRowColor\multirow{1}{*}{\ours} & {60.05} & {62.97} & {1716} & {83.87} & 67.97 & {54.17} & {37.89} & {64.54} & {95.2\%} & {92.8\%} & {7/8} & {8/8} \\
        \midrule
        
        \rowcolor{mygray}
        \multicolumn{13}{c}{\textit{Upper 16×5 Tokens (80 Tokens)} \ ${2.8\%}$}\\
        \multirow{1}{*}{\makecell[c]{VisionZip}} & 50.80 & 50.69 & 1431 & 61.82 & 66.93 & {51.65} & 34.44 & 51.77 & 81.8\% & 76.8\% & 2/8 & 3/8 \\
        \multirow{1}{*}{\makecell[c]{DivPrune}} & {55.73} & {59.97} & {1575} & {74.74} & {66.83} & {50.35} & {36.56} & {59.48} & 89.2\% & 85.7\% & 2/8 & 8/8 \\
        \OursRowColor\multirow{1}{*}{\ours} & {58.23} & {62.54} & {1681} & {81.89} & {67.13} & 49.56 & {36.11} & {61.86} & {92.0\%} & {89.6\%} & {6/8} & {8/8} \\
        \midrule
        
        \rowcolor{mygray}
        \multicolumn{13}{c}{\textit{Upper 8×5 Tokens (40 Tokens)} \ ${1.4\%}$}\\
        \multirow{1}{*}{\makecell[c]{VisionZip}} & 41.87 & 28.35 & 999 & 21.22 & 64.25 & 42.85 & 31.44 & 41.93 & 62.1\% & 52.6\% & 1/8 & 2/8 \\
        \multirow{1}{*}{\makecell[c]{DivPrune}} & {52.87} & {55.76} & {1462} & {67.49} & {66.78} & {48.02} & {33.44} & {55.40} & 83.7\% & 79.9\% & 2/8 & 4/8 \\
        \OursRowColor\multirow{1}{*}{\ours} & {54.52} & {59.88} & {1555} & {81.84} & {67.73} & {45.77} & {35.00} & {59.28} & {88.4\%} & {85.2\%} & {3/8} & {7/8} \\
        \midrule
        
        \rowcolor{mygray}
        \multicolumn{13}{c}{\textit{Upper 4×5 Tokens (20 Tokens)} \ ${0.7\%}$}\\
        \multirow{1}{*}{\makecell[c]{VisionZip}} & 36.56 & 18.38 & 814 & 0.40 & 63.56 & 35.36 & 31.56 & 34.98 & 52.1\% & 39.4\% & 1/8 & 2/8 \\
        \multirow{1}{*}{\makecell[c]{DivPrune}} & {49.57} & {48.54} & {1324} & {52.14} & {65.94} & {44.06} & {31.78} & {51.25} & 76.3\% & 71.0\% & 2/8 & 2/8 \\
        \OursRowColor\multirow{1}{*}{\ours} & {49.60} & {51.20} & {1457} & {82.41} & {66.88} & {42.33} & {33.67} & {55.34} & {83.3\%} & {79.2\%} & {3/8} & {3/8} \\
        \midrule
        
        \rowcolor{mygray}
        \multicolumn{13}{c}{\textit{Upper 2×5 Tokens (10 Tokens)} \ ${0.3\%}$}\\
        \multirow{1}{*}{\makecell[c]{VisionZip}} & 36.17 & 17.96 & 823 & 0.80 & 62.91 & 32.84 & 30.56 & 34.31 & 50.9\% & 38.5\% & 0/8 & 2/8 \\
        \multirow{1}{*}{\makecell[c]{DivPrune}} & {45.19} & {37.11} & {1134} & {25.48} & {65.25} & {40.33} & {33.22} & {45.54} & 66.8\% & 57.8\% & 2/8 & 2/8 \\
        \OursRowColor\multirow{1}{*}{\ours} & {45.72} & {38.75} & {1283} & {79.62} & {65.64} & {39.77} & {33.78} & {49.73} & {76.9\%} & {71.0\%} & {3/8} & {3/8} \\
        \midrule

        \rowcolor{mygray}
        \multicolumn{13}{c}{\textit{0 Tokens}}\\
        \multirow{1}{*}{\makecell[c]{Baseline}} & 38.23 & 17.87 & 867 & 25.84 & 64.60 & 37.77 & 31.56 & 37.43 & 57.5\% & 46.3\% & 1/8 & 2/8 \\
        \bottomrule
    \end{tabular}%
    }
    \caption{\textbf{Extended Performance Comparison with  Extremely Less Token Budgets on LLaVA-NeXT-7B.} Avg@8 is across all 8 datasets, while Avg@6 is across 6 High-IC datasets. The ``×5'' notation indicates maximum sampling to 5 images. Average percentages are calculated relative to the vanilla baseline for each metric.}
    \label{tab:token_scaling_llavanext_7b}
\end{table*} 
%%%%%%%%%%%%%%%%%%%%%%%%%%%%%%%%%%%%%%%%%%%%%%%%%%%%%%%%%%%%%%%%%%%%%%%%%%%%%%%
%%%%%%%%%%%%%%%%%%%%%%%%%%%%%%%%%%%%%%%%%%%%%%%%%%%%%%%%%%%%%%%%%%%%%%%%%%%%%%%

\end{document}